\newtheorem{proposition}{Proposition}
\newtheorem{definition}{Definition}
\DeclareMathOperator{\SO}{SO(2)}
\DeclareMathOperator{\logm}{logm}
\DeclareMathOperator{\expm}{expm}
\DeclareMathOperator{\relu}{ReLU}
\def\cmanifold{\mathbf{R}^+\!\times\!\SO}
\def\ang{\measuredangle}
\def\nzC{\mathbf{\widetilde{C}}}
\def\cz#1{\mathbf{#1}}
\def\rlarrows{\quad\mathrel{\substack{\longrightarrow \\[-.6ex] \longleftarrow}}\quad}
\def\conv{\ast}
\def\zconv{\,\widetilde{\ast}\,}
\definecolor{darkgreen}{rgb}{0, 0.9, 0}
\definecolor{brown}{rgb}{1, 0.5, 0}
\begin{document}
\title{SurReal: Complex-Valued Learning as Principled Transformations on a Scaling and Rotation Manifold}

\author{
\setlength{\tabcolsep}{10pt}
\begin{tabular}{ccc}
Rudrasis~Chakraborty &
Yifei~Xing & 
Stella~X.~Yu\\
\end{tabular}
\thanks{Manuscript received August 26, 2019; revised January 29, 2020 and June 10, 2020; accepted September 29, 2020. This work was supported in part by Berkeley Deep Drive and in part by DARPA. (Corresponding author: Stella X. Yu.)

The authors are with the International Computer Science Institute (ICSI), University of California, Berkeley (UC Berkeley), Berkeley, CA 94720, USA (e-mail: stellayu@berkeley.edu).

Color versions of one or more of the figures in this article are available online at \url{http://ieeexplore.ieee.org}.

Digital Object Identifier 10.1109/TNNLS.2020.3030565
}}

\maketitle

\begin{abstract}

Complex-valued data is ubiquitous in signal and image processing applications, and complex-valued representations in deep learning have appealing theoretical properties.  While these aspects have long been recognized, complex-valued deep learning continues to lag far behind its real-valued counterpart.

We propose a principled geometric approach to complex-valued deep learning.  
Complex-valued data could often be subject to arbitrary complex-valued scaling; as a result, real and imaginary components could co-vary.  Instead of treating complex values as two independent channels of real values, we  
recognize their underlying geometry:  We 
model the space of complex numbers as a product manifold of non-zero scaling and planar rotations.  Arbitrary complex-valued scaling naturally becomes a group of transitive actions on this manifold.  

We propose to extend the property instead of the form of real-valued functions to the complex domain.  We define convolution as weighted Fr\'{e}chet mean on the manifold that is {\it equivariant} to the group of scaling/rotation actions, and define distance transform on the manifold that is {\it invariant} to the action group.  The manifold perspective also allows us to define nonlinear activation functions such as tangent ReLU and $G$-transport, as well as residual connections on the manifold-valued data.

We dub our model {\it SurReal}, as our experiments on MSTAR and RadioML deliver high performance with only a fractional size of real-valued and complex-valued baseline models.

\end{abstract}

\begin{IEEEkeywords}
complex value, Riemannian manifold, Fr\'{e}chet mean, equivariance, invariance.
\end{IEEEkeywords}

\IEEEpeerreviewmaketitle

\def\figoverview#1{
\begin{figure*}[#1]
    \centering
    \includegraphics[clip,width=1\textwidth]{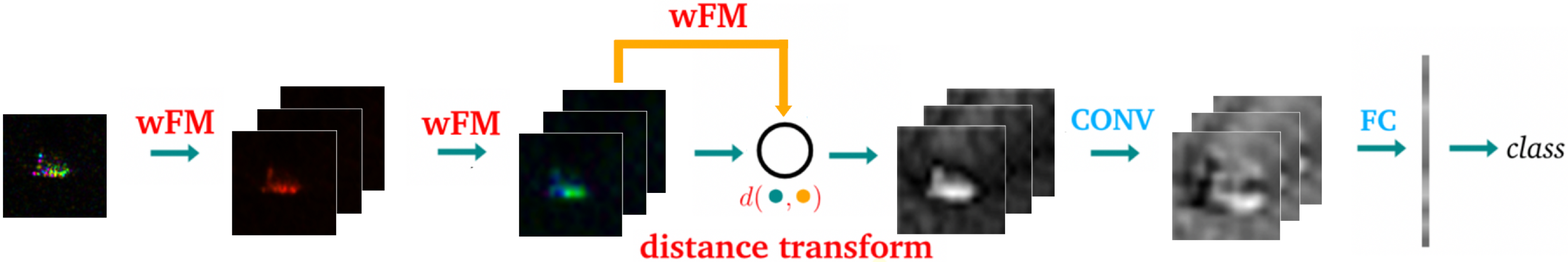}
    \caption{Sample architecture of our SurReal model. {\bf 1)} The input is a complex-valued image; each pixel is color coded with an HSV colormap according to its complex value, with the color intensity indicating the magnitude, the color hue indicating the phase, and the the full constant saturation.  {\bf 2)} The first two layers are our proposed complex-valued convolution in terms of weighted  Fr{\'e}chet mean filtering (wFM) on the  manifold derived from the polar form of complex numbers.  This convolution outputs complex-valued responses and it is equivariant to complex-valued scaling.  
    Each wFM layer could have multiple channels, each channel shown here as a complex-valued image.  Pixel-wise nonlinear activation functions such as our proposed tReLU and $G$-transport can be subsequently applied.
    {\bf 3)} The third layer is a distance transformation layer, where the manifold distance between the feature map and its wFM is computed.  This distance is invariant to complex-valued scaling.  
    {\bf 4)} Once the representation becomes real valued after the distance transformation layer, we could use any real-valued CNN layer functions for classification.  Shown here is the real-valued convolution (CONV) layer followed by the fully connected (FC) layer towards the final softmax classification.  With the built-in invariance to complex-valued scaling, our SurReal model can outperform real-valued baseline models on complex-valued data with a fraction of the baseline model size.
    }
    \label{fig:overview}
\end{figure*}
}

\section{Introduction}
\IEEEPARstart{W}{hile} deep learning has been widely successful in computer vision and machine learning \citep{lecun1998gradient, bengio2009learning, krizhevsky2012imagenet, he2016deep, lecun2015deep},
most techniques are only applicable to data that lie in a vector space. 
How to handle manifold-valued data and incorporate non-Euclidean geometry into  deep learning has become an active topic of research
\citep{cohen2016group, chakraborty2018h, chakraborty2018manifoldnet, esteves2017polar, bronstein2017geometric, chakraborty2018statistical}.

We are interested in extending deep learning to  complex-valued data, e.g., synthetic aperture radar (SAR) images in remote sensing, magnetic resonance (MR) images in medical imaging, or radio frequency (RF) signals in electrical engineering.
For such naturally complex-valued data, both the size (or magnitude) and the phase of a complex-valued measurement contain useful information.  For example, in SAR images, the magnitude encodes the amount of energy, whereas the phase variation indicates the object material and shape boundaries.  

Complex-valued data could also arise from the more informative complex-valued representation of naturally real-valued data.  The most notable examples are the Fourier spectrum and spectrum-based computer vision techniques ranging from steerable filters \citep{freeman1991design} to spectral graph embedding \citep{maire2016affinity,yu2011angular}.

The most common complex-valued deep learning approach is to simply apply real-valued deep learning methodology to the two-channel representation of complex-valued data $z=x+i\cdot y$ (where $i$ denotes the imaginary unit): the real component $x$ and the imaginary component $y$ are regarded as independent channels of the input. 

However, the independence assumption between the real and imaginary components does {\it not} hold in general.   For instance, in MR and SAR images, the pixel intensity value could be subject to arbitrary scaling by complex number $s=m\, e^{j\theta}$, where all the pixel values are simultaneously scaled in magnitude by $m$ and shifted in phase by $\theta$.  That is, any measurement $z$ is simply a representative of a whole class of possible equivalent measurements $\{z \,s: s=m\,e^{j\theta}: m>0, \forall \theta\}$.  Instead of being independent of each other, the real and imaginary components $(x,y)$ of $z$ co-vary in this equivalent class.

The co-variance of the two components of complex-valued data has not been exploited in complex-valued deep learning.  The common approach to learn a classifier invariant to scaling is to augment the training data with complex-valued scaling \citep{krizhevsky2012imagenet,dieleman2015rotation,wang2017joint}.  Such extrinsic data manipulation increases the amount of the training data and is rather ineffective: It takes a longer time to train the model, yet the invariance is not guaranteed.

Our goal is to develop the invariance to complex-valued scaling as an intrinsic property of the neural network itself.  We treat each complex-valued data sample as a point in a non-Euclidean space that respects the intrinsic geometry of complex numbers.  We propose new convolution and fully connected layer functions that can achieve equivariance and invariance to complex-valued scaling.
 
There has been a long line of works which define convolution in a non-Euclidean space by treating each data sample as a function in that space \citep{worrall2017harmonic,cohen2016group,cohen2017convolutional,esteves2017polar,chakraborty2018h,kondor2018generalization}. 
 
\figoverview{tp}
 
The challenge for defining such an equivariant convolution operator in the non-Euclidean space is the lack of a proper vector space structure.  In the Euclidean space, we can move from one point to another using an element from the group of translations; the standard convolution is thus equivariant to the action of the group of translations.  However, in the non-Euclidean space, e.g., a hypersphere, translation equivariance is no longer meaningful:  Translation is not the group to move from one point to another on a hypersphere, but rotation is.  

The concept of equivariance of an operator on a space is thus intimately related to the transitivity of a group of actions on that space.  We say that a group $G$ acts transitively on a space if there exists a $g\in G$ to go from one point to another on the space.   The group of translations acts transitively on  the Euclidean space, whereas the group of rotations acts transitively on a hypersphere.   The group that transitively acts on the  non-Euclidean space of the complex plane is non-zero scaling and planar rotations in the complex plane. 

The manifold view of convolution as an operator with equivariance to transitive actions on that space applies to both the domain and the range of data, e.g., for an image, its pixel coordinates define the domain, and its pixel intensities define the range.  Here we focus on the {\it range} space of data, in order to extend deep learning to complex-valued images and signals.

Our key insight is to represent a complex number in its polar form and define a Riemannian manifold on which complex scaling corresponds to the general transitive action group.  When a data sample lies on a Riemannian manifold, there are previously established results for deep learning:
\begin{itemize}[leftmargin=*]
\setlength{\itemsep}{0mm}
\item  Convolution defined by weighted Fr\'{e}chet mean (wFM) \citep{Frechet1948elements}  is equivariant to the group that naturally acts on that manifold \citep{chakraborty2018manifoldnet}.
\item  Since wFM is non-linear and acts like a contraction mapping \citep{Mallat2016understanding} analogous to ReLU or sigmoid, non-linear activation functions such as ReLU may not be needed.
\end{itemize}

We propose three types of complex-valued layer functions from the Riemannian geometric point of view:
\begin{enumerate}[leftmargin=*]
\setlength{\itemsep}{0mm}
\item{\bf wFM:}  a new convolution operator on the manifold for complex-valued data.  It is equivariant to complex-valued scaling.
The weights of wFM are to be learned.

\item{\bf Tangent ReLU}: a new nonlinear activation function that applies ReLU to the projections in the tangent space of the complex-valued manifold.  We also propose another option called {$G$-}transport, which transports a point on the complex manifold by an action in the scaling and rotation group.
\item{\bf Distance transform}: a new fully-connected layer operator that computes the manifold distance between a feature map and its wFM.  It is invariant to complex-valued scaling.  The weights of wFM are to be learned.
\end{enumerate}
The distance transform layer takes a complex-valued input to the real-valued domain, where any real-valued convolutional neural network (CNN) functions such as standard convolutions and fully connected (FC) layers can be subsequently used. 

Fig. \ref{fig:overview} shows a sample CNN architecture composed using our complex-valued layer functions.  A complex-valued image first passes through two wFM complex-valued convolutional layers, and then undergoes the distance transform.  The resulting real-valued distances are subsequently fed into a real-valued CNN classifier with one convolution layer and one FC layer.  Each convolutional layer is illustrated with a single channel response among a stack of many, with color images encoding complex-valued responses and grayscale images encoding real-valued responses.

Our complex-valued CNN has a group invariant property similar to the standard CNN on real-valued data.  Existing methods extend the real-valued counterpart to the complex domain based on the form of functions such as convolution or batch normalization \citep{bunte2012adaptive,trabelsi2017deep,pat:complex17}, 
not on the property of functions such as equivariance or linearity.  Our complex-valued CNN is composed of layer functions with  the desired equivariance and invariance properties that are essential for a real-valued CNN classifier in the Euclidean space; it is thus  a theoretically justified analog of the real-valued CNN. 

We compare our method with several baselines on two publicly available complex-valued datasets: MSTAR and RadioML.  Our model consistently outperforms the real-valued CNN baseline,  with fewer than 1\% on MSTAR and 3\% on RadioML of the baseline model parameters.  

We thus name our approach {\it SurReal} (pun intended): a surprisingly lean complex-valued model that beats the real-valued CNN model.
Our work has three major contributions.
\begin{enumerate}
\setlength{\itemsep}{0mm}
\item We propose novel complex-valued layer functions  with proven equivariance and invariance properties.
\item We extend our model to complex-valued residual CNNs. 
\item We validate our method on classification experiments.  Our SurReal CNNs outperform real- and complex-valued baselines at a fraction of their model sizes.
\end{enumerate}
These results demonstrate significant benefits of proposing CNN layer functions in terms of desirable intrinsic properties on the complex plane as opposed to applying the standard CNN to the 2D Euclidean embedding of complex numbers.

\section{Related Works}

Complex numbers are powerful representations and concepts in mathematics,  with intimate connections to geometry, topology, and differentiation \citep{complexBk1998}.   They have a wide range of applications in physics and engineering.  

\noindent
{\bf Complex-valued data representations} are widely used as a modeling choice to encode richer information than real-valued representations, especially for directional or cyclic data.
\citep{amin2009single} learns a mapping from a finite range of real values to the unit circle in the complex plane.
\citep{cadieu2012learning} trains a complex-valued sparse coding model to capture both edge structure and motion structure. 
\citep{yu:bright09,yu:ae12} combine the confidence and size of a measurement in a single complex value, and learns a global embedding from pairwise local measurements.
\citep{maire2016affinity} simultaneously encodes both grouping and figure-ground ordering relationships between neighboring pixels, and learns complex-valued pairwise pixel relationships from pixel-wise figure-ground annotations.
\citep{reichert2013neuronal} uses complex-valued neuronal units to model biologically plausible deep learning networks.
 \citep{scattering2013,argument2015} adopt wavelet transforms at earlier layers. \citep{arjovsky2016unitary} adopts unitary weight matrices in hidden layers for better learning performance.

\noindent
{\bf Traditional complex-valued data analysis} utilizes higher-order statistics such as variance fractal dimension trajectory \citep{Kinsner2010} and spectral analysis \citep{reichert1992automatic} to make adequate predictions. 

\noindent
{\bf Early neural network approaches} have already noted that complex values have many nice mathematical properties that real-value data do not have, e.g., the complex identity theorem.  Transformations from the input to the output can be more effectively learned with complex-valued networks instead of real-valued networks.  Various complex-valued activation functions have been explored, although with little demonstration of their success in real data settings \citep{complexBP1990,complexBP1992,complexBP1992sp,complexBP1997}.

\noindent
{\bf Recent neural network approaches} continue to build upon the theoretical advantages of complex-valued data to improve the convergence, stability, and generalization of neural networks \citep{nitta2002critical,hirose2012generalization},
and to facilitate the noise-robust memory retrieval mechanisms in capsule networks \citep{cvCapsNet2019}.  Real-valued layer functions have also been extended to the complex domain according to the form of the functions such as convolution, ReLU, and batch normalization \citep{bunte2012adaptive,trabelsi2017deep,pat:complex17,patThesis2018}.  Complex-valued deep learning has also been extended to quaternion neural networks, as quaternions generalize the concept of complex values from 2D to 3D \citep{quaternion2019}. 

\noindent
{\bf Recent graph convolution neural networks} open up new computational models in the complex domain
\citep{scarselli2008graph,bruna2013spectral}.  Since convolution in the spatial domain is equivalent to multiplication in the spectral domain, a natural extension of convolution to data defined on an arbitrary graph is to construct a convolutional filter in terms of multiplicative weights on the spectrum of the graph Laplacian \citep{bruna2013spectral}.   The spectrum of the graph Laplacian is real-valued if the graph is undirected, and complex-valued if it is directed \citep{singh2016graph}.

\noindent
{\bf Our SurReal complex-valued CNN} is unique in utilizing the geometric property of the complex numbers and approaching complex-valued learning as a special task of deep learning on Riemannian manifolds.  

Existing methods such as \citep{maire2016affinity, trabelsi2017deep, amin2009single,pat:complex17} treat a complex value as a vector in the Euclidean space of $\mathbf{R}^2$.  This choice, while straightforward, essentially destroys the covariant relationship between real and imaginary parts of a complex number.  Naturally complex-valued data such as SAR, MRI, and RF could be subject to complex-valued scaling without changing the underlying observation. 

To deal specifically with complex numbers, we first separate and acknowledge the extrinsic scaling effect by asking the convolution operator to be equivariant to complex-valued scaling.  For a CNN classifier, we design the distance transform layer to be invariant to complex-valued scaling.  

Our SurReal CNN classifier can focus entirely on the discriminative information between classes, without the need to build up additional scale invariance by repeatedly training on data augmented with complex-valued scaling.  
Our SurReal model is thus a surprisingly lean complex-valued model that beats the real-valued CNN model on complex-valued data.  An earlier preliminary version of this work was presented in \citep{surreal:cvprw19}.
\section{
A Scaling-Rotation Manifold
for the Geometry of Complex Numbers
}

A crucial property of complex-valued data is  complex-valued scaling ambiguity: The MRI or SAR images of the same scene could be related by the multiplication of a single complex number, depending on how the data is acquired. 
Complex-valued scaling can be captured by the scaling action on the magnitude and the rotation action on the phase.

Instead of treating the complex plane as the usual 2D Euclidean space, we identify the non-zero complex plane as the product manifold of positive magnitudes and planar rotations.  We show that scaling and rotation actions preserve the manifold distance defined on the non-zero complex manifold.

\noindent
{\bf Space of complex numbers.} 
Let $\mathbf{R}$ and $\mathbf{C}$ denote the field of real numbers and complex numbers respectively.  We have:
\begin{align}
{\bf z} = x+i\,y\in \mathbf{C}, \quad 
\forall x, y\in \mathbf{R}.
\end{align}
 According to this 2D real-valued representation $(x,y)$ of {\bf z},  $\mathbf{C}$ is a Riemannian manifold \citep{boothby1986introduction}.  The distance induced by the canonical Riemannian metric is:
\begin{align}
\label{eq:edist}
d({\bf z}_1,{\bf z}_2) = \sqrt{(x_2-x_1)^2+(y_2-y_1)^2},
\end{align}
the common Euclidean distance in the 2D complex plane.

\noindent
{\bf Polar form of complex numbers.}  
Any non-zero complex number can be uniquely represented in the polar form, in terms of its magnitude and phase.
\begin{definition}
\label{def1}
For $\forall{\bf z}\in\mathbf{C}$ and $\bf{z}\neq 0$, its polar form is:
\begin{align}
{\bf z} &= |{\bf z}|\exp\left(i\,\ang{{\bf z}}\right)\\
\bf{z} \in \mathbf{\widetilde{C}} & = \mathbf{C} \setminus \{0+i\,0\}\\
\text{magnitude: }|\bf{z}| & = \sqrt{x^2+y^2}\\
\text{ phase: }\ang{\bf{z}} & = \arctan(y,x)
\end{align}
where $\exp$ is the exponential function and $\arctan$ is the 2-argument 
arc-tangent function that gives the angle in the complex plane between the positive $x$ axis and the line from the origin to the point $(x,y)$.
\end{definition}

\noindent
{\bf Scaling-Rotation product manifold for $\nzC$.}  
Based on the polar form, we identify the non-zero complex plane $\widetilde{\mathbf{C}}$ as the product space of non-zero scaling and 2D rotations
\begin{align}
\widetilde{\mathbf{C}}
\Longleftrightarrow
\cmanifold
\end{align}
where $\mathbf{R}^+$ is the manifold of positive reals and $\SO$ is the manifold of planar rotations -- a rotation Lie group.

We define a bijective mapping $F$ that can go back and forth from  the complex plane $\widetilde{\mathbf{C}}$  to the manifold space $\cmanifold$:
\begin{align}
%\widetilde{\mathbf{C}}  
%&
%\substack{F\\  \rlarrows \\ F^{-1}}
%\cmanifold
%\\
{\bf z} = |{\bf z}|\exp\left(i\,\ang{{\bf z}}\right)
&
\substack{F\\  \rlarrows \\ F^{-1}}
\left(|{\bf z}|, R(\ang{\bf z})\right)\\
R(\ang{\bf z})
&\quad =\quad
\begin{bmatrix*}[r] \cos(\theta) & -\sin(\theta) \\ \sin(\theta) & \cos(\theta)\end{bmatrix*}.
\end{align}
Both spaces are parameterized by magnitude and phase; the phase is turned into a complex number with $\exp(\cdot)$ for $\nzC$ and into a 2D rotation matrix with $R(\cdot)$ for $\cmanifold$.

\noindent
{\bf Manifold distance for $\nzC$.}
The exponential and logarithmic maps are respectively 
$\exp$ and $\log$ for $\mathbf{R}^+$, matrix exponential $\expm$ and matrix logarithm $\logm$ for $\SO$.
\begin{definition}
\label{logm}
The matrix exponential and logarithm of matrix $X$ are defined respectively as:
\begin{align*}
\expm(X)&=\sum_{n=0}^{\infty} \frac{X^{n}}{n!} \\
X &= \logm(Y) \text{ if and only if } Y = \expm(X).
\end{align*}
\end{definition}
\noindent
The distance on this product manifold in Eqn. \eqref{eq:edist} becomes:
\begin{align}
\label{eq:mdist}\displaystyle
\hspace{-5pt}
d\left(\mathbf{z}_1, \mathbf{z}_2\right) 
\!=\! \sqrt{\log^2\frac{|\mathbf{z}_2|}{|\mathbf{z}_1|}+\left\|
\logm\!\left(\!
R\left(\ang\mathbf{z}_2\right)
\phantom{\frac{.}{.}}\hspace{-8pt}
R(\ang\mathbf{z}_1)^{-1}
\!\right)\!
\right\|^2}.
\end{align} 

\noindent
{\bf Scaling-rotation is transitive on $\nzC$.}  The complex plane
$\widetilde{\mathbf{C}}$ as identified with $\cmanifold$ is a Riemannian homogeneous space \citep{helgason1962differential}.  We define  transitive actions that move a point around on the manifold \citep{dummit2004abstract}.
\begin{definition}
\label{def2}
Given a (Riemannian) manifold $\mathcal{M}$ and a group $G$ with identity element $e$, we say that $G$ acts on $\mathcal{M}$ (from the left) if there exists a mapping $L: G \times \mathcal{M} \rightarrow \mathcal{M}$ given by $\left(g,X\right) \mapsto g.X$ that satisfies two conditions:
\begin{enumerate} 
\item Identity: $L\left(e,X\right) = e.X = X$ 
\item Compatibility: $(gh).X = g.(h.X)$, $\forall g,h\in G$.
\end{enumerate}
An action is called transitive {\it if and only if} given $X, Y \in \mathcal{M}$, there exists an element $g \in G$, such that $Y = g.X$.  
\end{definition}
It is straightforward to verify that scaling and rotation in $\cmanifold$ satisfies the identity and compatibility conditions on $\mathbf{\widetilde{C}}$.  It is also a transitive group action: For any complex numbers ${\bf z}_1,{\bf z}_2\in \mathbf{\widetilde{C}}$, there always exists a relative scaling (of the magnitude) and rotation (of the phase) that maps  ${\bf z}_1$ to ${\bf z}_2$.  
\begin{proposition}
\label{theory:prop1}
The scaling-rotation Lie group $\cmanifold$ transitively acts on $\widetilde{\mathbf{C}}$ and the action $g$ to take  ${\bf z}_1$ to ${\bf z}_2$ is:
\begin{align}
g = \left(\frac{|{\bf z}_2|}{|{\bf z}_1|},\;
R(\ang\mathbf{z}_2)
R(\ang\mathbf{z}_1)^{-1}
\right) \in \cmanifold.
\end{align}
\end{proposition}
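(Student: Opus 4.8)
The plan is to first make the group action explicit and then verify transitivity by direct computation. Using the bijection $F$ from the excerpt, I identify a point $\mathbf{z} = |\mathbf{z}|\exp(i\,\ang{\mathbf{z}})$ with the pair $(|\mathbf{z}|, R(\ang{\mathbf{z}}))\in\cmanifold$, and I define the left action $L\colon \cmanifold\times\nzC\to\nzC$ so that $(m, R(\phi))$ acts componentwise in the product structure: on the magnitude factor $\mathbf{R}^+$ by multiplication, $|\mathbf{z}|\mapsto m\,|\mathbf{z}|$, and on the rotation factor $\SO$ by left translation, $R(\ang{\mathbf{z}})\mapsto R(\phi)\,R(\ang{\mathbf{z}}) = R(\phi+\ang{\mathbf{z}})$. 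Pulled back through $F^{-1}$ this reads $(m, R(\phi)).\mathbf{z} = (m\,|\mathbf{z}|)\exp\!\big(i\,(\phi+\ang{\mathbf{z}})\big)$. Since $m>0$ and $|\mathbf{z}|>0$, the image has positive magnitude and so stays in $\nzC$; and although the phase is defined only modulo $2\pi$, this causes no ambiguity because $R(\cdot)$ is $2\pi$-periodic, so the $\SO$ factor absorbs the wraparound automatically once we pass to the manifold picture.

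Next I would record that this is a genuine group action, i.e.\ that it satisfies the Identity and Compatibility axioms of Definition~\ref{def2}. Identity is immediate, since $(1, R(0)) = (1, I)$ is the identity of $\cmanifold$ and fixes every $\mathbf{z}$. Compatibility follows from the group laws on each factor separately: multiplication of positive reals is associative, and $R(\phi_1)\big(R(\phi_2)\,R(\ang{\mathbf{z}})\big) = \big(R(\phi_1)R(\phi_2)\big)R(\ang{\mathbf{z}})$ by associativity of matrix multiplication (equivalently, addition of angles). This is the part the excerpt calls ``straightforward to verify,'' and it uses nothing beyond the product structure of $\cmanifold$.

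The substantive step is transitivity. Given $\mathbf{z}_1,\mathbf{z}_2\in\nzC$, I would exhibit the candidate $g = \big(|\mathbf{z}_2|/|\mathbf{z}_1|,\; R(\ang{\mathbf{z}_2})R(\ang{\mathbf{z}_1})^{-1}\big)$ and check that $g\in\cmanifold$ and $g.\mathbf{z}_1 = \mathbf{z}_2$. Membership holds because $\mathbf{z}_1\neq 0$ forces $|\mathbf{z}_1|>0$, hence $|\mathbf{z}_2|/|\mathbf{z}_1|\in\mathbf{R}^+$, while $R(\ang{\mathbf{z}_2})R(\ang{\mathbf{z}_1})^{-1} = R(\ang{\mathbf{z}_2}-\ang{\mathbf{z}_1})\in\SO$ since $\SO$ is a group. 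Applying the action: the magnitude of $g.\mathbf{z}_1$ is $(|\mathbf{z}_2|/|\mathbf{z}_1|)\cdot|\mathbf{z}_1| = |\mathbf{z}_2|$, and the rotation factor becomes $R(\ang{\mathbf{z}_2})R(\ang{\mathbf{z}_1})^{-1}R(\ang{\mathbf{z}_1}) = R(\ang{\mathbf{z}_2})$; pulling back through $F^{-1}$ gives exactly $\mathbf{z}_2 = |\mathbf{z}_2|\exp(i\,\ang{\mathbf{z}_2})$. Hence the action is transitive, realized by the stated $g$.

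Optionally I would add that $g$ is in fact the \emph{unique} such element: $\cmanifold$ and $\nzC$ have the same dimension, and the stabilizer of any point is trivial (if $(m, R(\phi))$ fixes $\mathbf{z}$ then $m=1$ and $R(\phi)=I$), so the action is simply transitive. I do not expect a real obstacle here; the only points requiring care are keeping the two group operations distinct — multiplicative on the $\mathbf{R}^+$ factor, rotation composition (angle addition) on the $\SO$ factor — and noting that the $2\pi$-periodicity of $R(\cdot)$ makes the phase-modulo-$2\pi$ issue vanish in the manifold formulation.
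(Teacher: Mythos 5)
Your proof is correct and follows exactly the route the paper intends: the paper itself offers no formal proof of Proposition~\ref{theory:prop1}, merely remarking before the statement that the identity, compatibility, and transitivity conditions are ``straightforward to verify'' via a relative scaling of magnitude and rotation of phase, which is precisely the componentwise computation you carry out. Your verification that $g.\cz{z}_1$ has magnitude $|\cz{z}_2|$ and rotation factor $R(\ang{\cz{z}_2})$, together with the membership check $|\cz{z}_2|/|\cz{z}_1|\in\mathbf{R}^+$, supplies the omitted details faithfully (and the observation that the action is simply transitive is a correct bonus).
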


\noindent
{\bf Scaling-rotation is isometric on $\nzC$.}
We now show that scaling and rotation actions preserve our manifold distance.
\begin{proposition}
\label{theory:prop2}
The scaling and rotation Lie group is isometric on the complex plane $\nzC$:
$\forall \cz{z}_1,\cz{z}_2\in\nzC, g\in \cmanifold$.
\begin{align}
    d(g.\cz{z}_1,g.\cz{z}_2) = d(\cz{z}_1,\cz{z}_2)
\end{align}
where $d$ is the manifold distance defined in Eqn. (\ref{eq:mdist}).
\end{proposition}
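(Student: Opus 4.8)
The plan is to substitute the explicit form of a group action into the distance formula \eqref{eq:mdist} and check that the action cancels term by term. First I would record how $g$ acts. Writing $g=(m,Q)\in\cmanifold$ with $m>0$ and $Q\in\SO$, and identifying $\mathbf{z}$ with the pair $\left(|\mathbf{z}|,R(\ang\mathbf{z})\right)$ via the bijection $F$, the action is $g.\mathbf{z}=\left(m\,|\mathbf{z}|,\;Q\,R(\ang\mathbf{z})\right)$; this is exactly the scaling/rotation described in Proposition~\ref{theory:prop1} (equivalently, multiplication of $\mathbf{z}$ by the complex number $m\,e^{i\phi}$ where $Q=R(\phi)$). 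Hence $|g.\mathbf{z}|=m\,|\mathbf{z}|$ and the rotational component of $g.\mathbf{z}$ is $Q\,R(\ang\mathbf{z})$.

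Next I would treat the two terms under the square root in \eqref{eq:mdist} separately. For the magnitude term, $\log^2\frac{|g.\mathbf{z}_2|}{|g.\mathbf{z}_1|}=\log^2\frac{m\,|\mathbf{z}_2|}{m\,|\mathbf{z}_1|}=\log^2\frac{|\mathbf{z}_2|}{|\mathbf{z}_1|}$, so $m$ drops out immediately. For the rotation term, the matrix appearing inside $\logm$ becomes $\bigl(Q\,R(\ang\mathbf{z}_2)\bigr)\bigl(Q\,R(\ang\mathbf{z}_1)\bigr)^{-1}=Q\,R(\ang\mathbf{z}_2)\,R(\ang\mathbf{z}_1)^{-1}\,Q^{-1}$, i.e., the original matrix $R(\ang\mathbf{z}_2)R(\ang\mathbf{z}_1)^{-1}$ conjugated by $Q$.

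To finish, I would invoke two standard facts: $\logm$ is equivariant under conjugation, $\logm\!\left(Q A Q^{-1}\right)=Q\,\logm(A)\,Q^{-1}$ (immediate from the power series for $\expm$), and the Frobenius norm is conjugation-invariant under an orthogonal matrix, $\|Q M Q^{-1}\|=\|M\|$ for $Q\in\SO$ (by the cyclic property of the trace and $Q^{-1}=Q^{\mathsf T}$). Applying these with $A=R(\ang\mathbf{z}_2)R(\ang\mathbf{z}_1)^{-1}$ shows $\bigl\|\logm\!\left(Q A Q^{-1}\right)\bigr\|=\bigl\|\logm(A)\bigr\|$, so the rotation term is also unchanged; combining the two terms gives $d(g.\mathbf{z}_1,g.\mathbf{z}_2)=d(\mathbf{z}_1,\mathbf{z}_2)$. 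Since $\SO$ is abelian, there is in fact a shortcut here: $Q$ commutes with $R(\ang\mathbf{z}_2)R(\ang\mathbf{z}_1)^{-1}$, so the conjugation is trivial and the cancellation is direct; I would still prefer the conjugation argument as the primary one, since it is the robust version that generalizes.

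The computation is essentially routine once the action is written down correctly. The only place that needs a little care is the rotation term, where one must justify moving $Q$ through the matrix logarithm and then out of the norm, rather than naively cancelling; and one should note that the branch of $\logm$ on $\SO$ (principal value, rotation angle in $(-\pi,\pi]$) is respected because conjugation by $Q\in\SO$ preserves the spectrum, hence the rotation angle, of its argument.
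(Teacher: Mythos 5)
Your proof is correct, and the setup (write $g=(m,Q)$, substitute into Eqn.~\eqref{eq:mdist}, cancel $m$ in the magnitude term, examine the matrix inside $\logm$) matches the paper exactly. Where you diverge is in the rotation term: the paper cancels $A$ against $A^{-1}$ directly by invoking that $\SO$ is Abelian, so $A\,R(\ang\mathbf{z}_2)\,(A\,R(\ang\mathbf{z}_1))^{-1}=R(\ang\mathbf{z}_2)\,A\,A^{-1}R(\ang\mathbf{z}_1)^{-1}=R(\ang\mathbf{z}_2)R(\ang\mathbf{z}_1)^{-1}$, and nothing more is needed. You instead treat the term as a conjugation $QAQ^{-1}$ and invoke equivariance of $\logm$ under conjugation plus orthogonal invariance of the Frobenius norm, only mentioning the Abelian cancellation as a shortcut. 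Both arguments are valid; the paper's is shorter and exploits the special fact that $\SO$ is commutative, while yours is the one that survives if $\SO$ is replaced by a non-Abelian rotation group (e.g.\ $\mathrm{SO}(3)$ in a quaternion analogue), at the cost of needing the two auxiliary lemmas you state. Your care about the branch of $\logm$ (conjugation preserves the spectrum, hence the principal rotation angle) is a legitimate point that the paper glosses over, though in the Abelian route it is moot since the matrix inside $\logm$ is literally unchanged.
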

\begin{proof}
We use the definition of $d$ and the property that the 2D rotation group $\SO$ is Abelian:  $\forall A,B\in\SO$, $AB=BA$.  Let $g=(r,A)\in \cmanifold$.  We have:
\begin{align*}
&d(g.\cz{z}_1,g.\cz{z}_2) \\
=&\sqrt{
\log^2\frac{r|\mathbf{z}_2|}{r|\mathbf{z}_1|}+\left\|
\logm\!\left(\!
AR\left(\ang\mathbf{z}_2\right)\,
\phantom{\frac{.}{.}}\hspace{-6pt}
\left(AR(\ang\mathbf{z}_1)\right)^{-1}
\!\right)\!
\right\|^2}\\
=&\sqrt{
\log^2\frac{|\mathbf{z}_2|}{|\mathbf{z}_1|}+\left\|
\logm\!\left(\!
R\left(\ang\mathbf{z}_2\right)A\,
\phantom{\frac{.}{.}}\hspace{-6pt}
\left(R(\ang\mathbf{z}_1)A\right)^{-1}
\!\right)\!
\right\|^2}\\
=&\sqrt{
\log^2\frac{|\mathbf{z}_2|}{|\mathbf{z}_1|}+\left\|
\logm\!\left(\!
R\left(\ang\mathbf{z}_2\right)\,
\phantom{\frac{.}{.}}\hspace{-6pt}
R(\ang\mathbf{z}_1)^{-1}
\!\right)\!
\right\|^2}= d(\cz{z}_1,\cz{z}_2),
\end{align*}
completing the proof.
\end{proof}

\def\figEqvar#1{
\begin{figure}[#1]\centering
\includegraphics[width=0.48\textwidth]{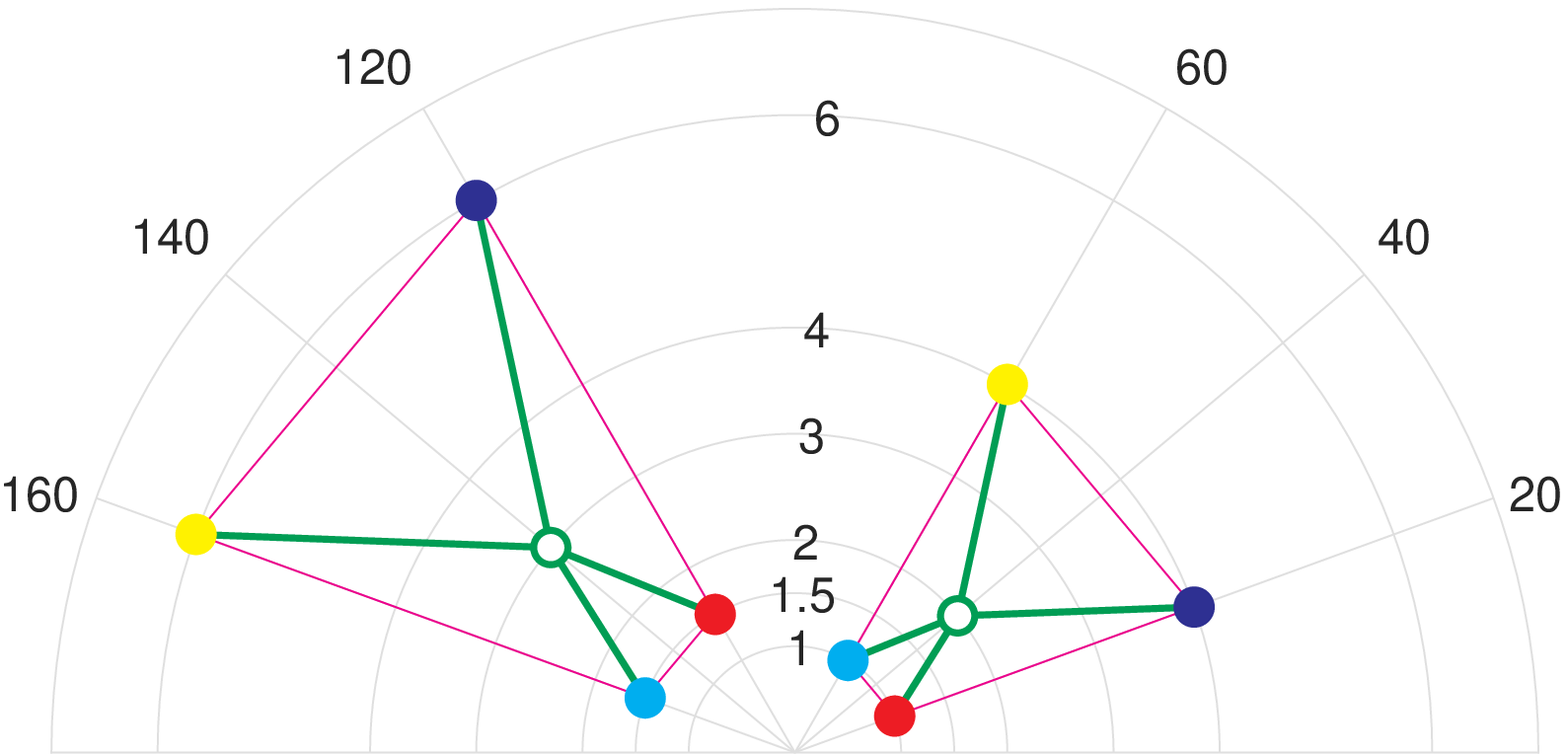}
\caption{Our complex-valued convolution in terms of wFM on the $\cmanifold$ manifold is equivariant to complex-valued scaling.  Consider 4 numbers in the complex plane, marked by 4 colored points on a small magenta trapezoid.  Their equally weighted wFM (marked by the green circle) sits inside the trapezoid at the geometric mean of their magnitudes and the mean of their phases.  When the 4 complex numbers are multiplied by $1.5\exp\left(i\,\frac{100\pi}{180}\right)$, the points are scaled by $1.5$ and rotated by $100^\circ$, moving to the larger trapezoid.  The new wFM is simply the old wFM transported by the same movement.
\label{fig:Eqvar}  
}
\end{figure}
}

\def\figtReLU#1{
\begin{figure}[#1]\centering
\setlength{\tabcolsep}{0pt}
\begin{tabular}{cc}
\includegraphics[clip,width=0.24\textwidth]{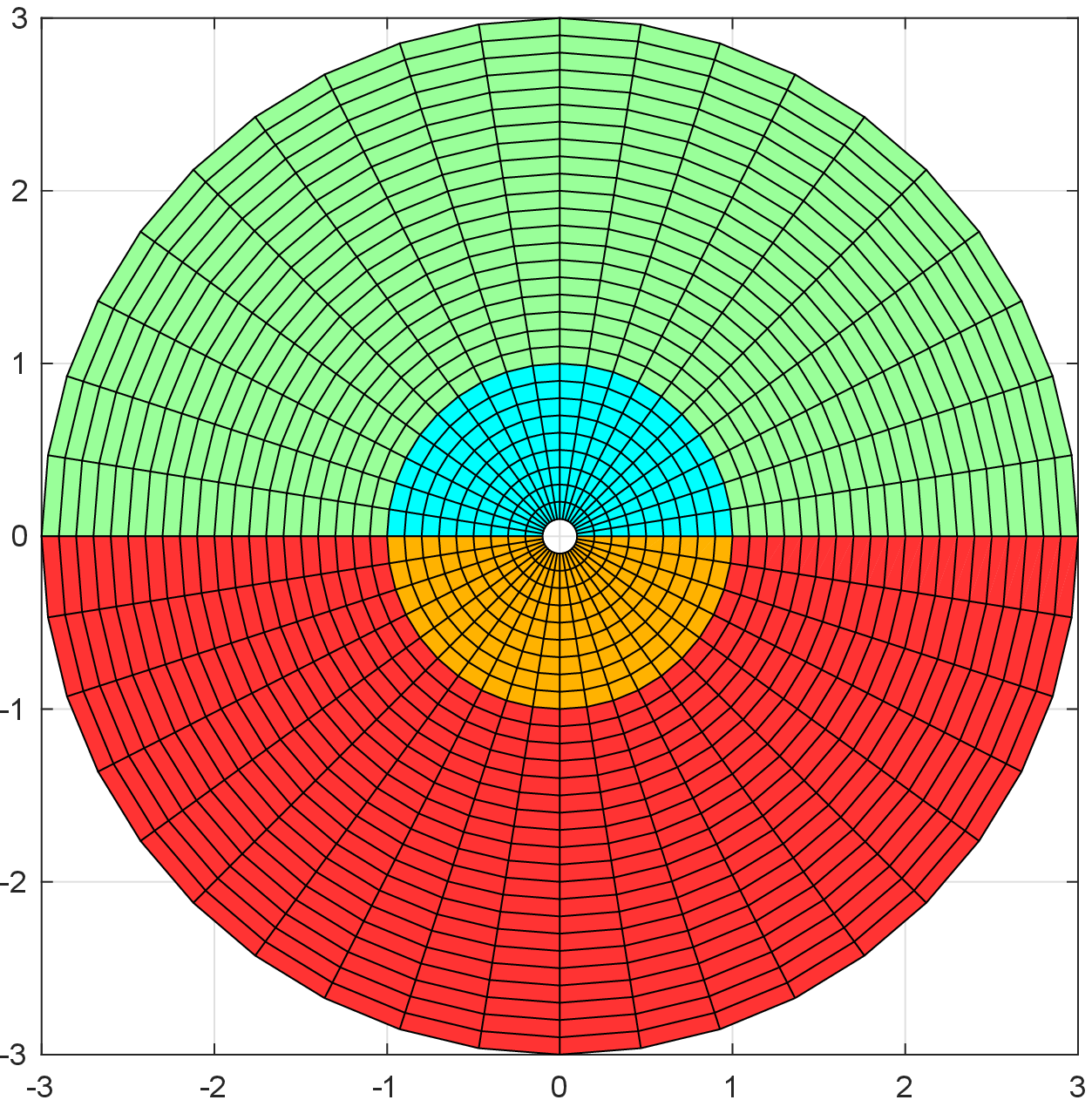}&
\includegraphics[clip,width=0.24\textwidth]{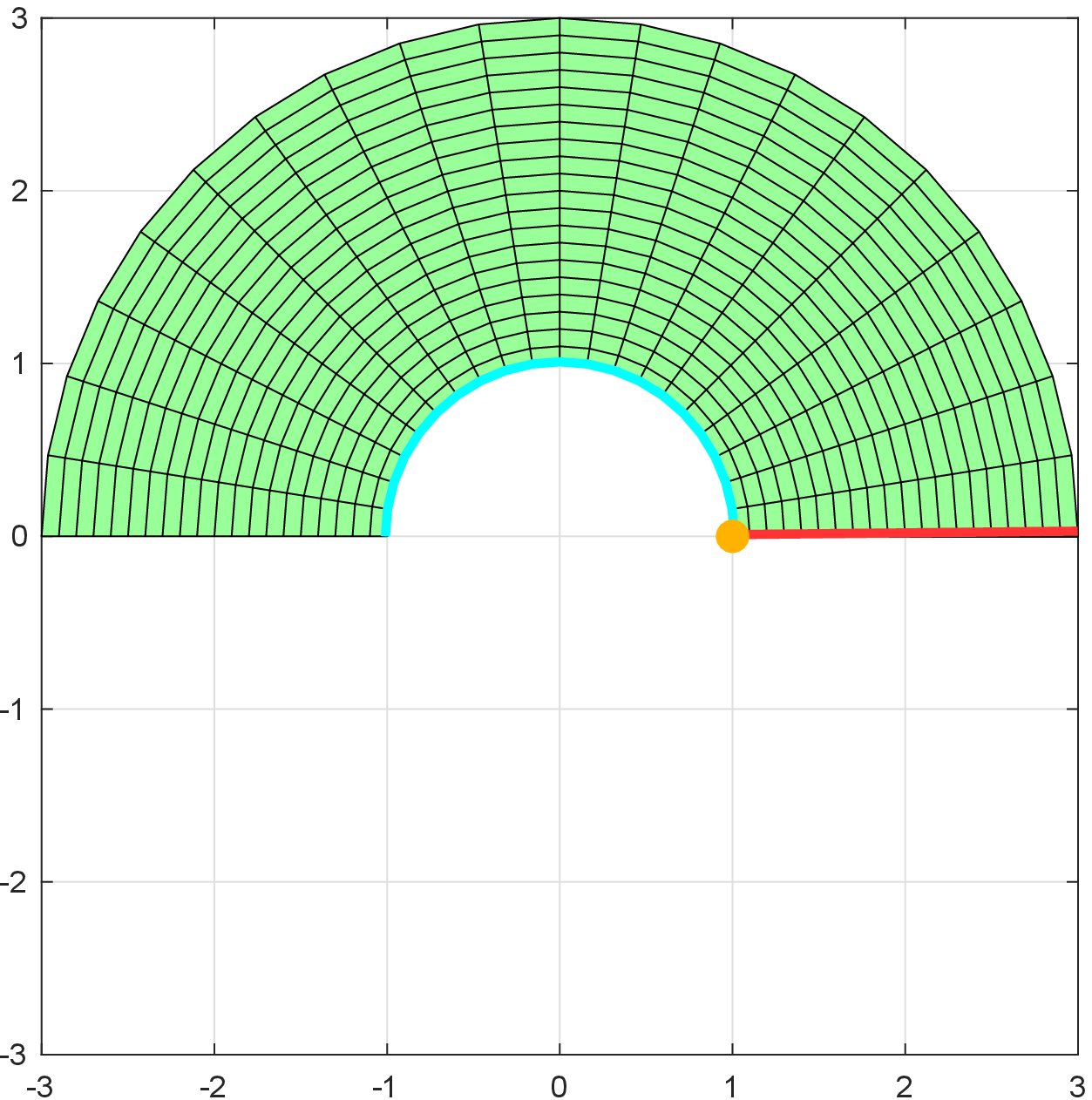}\\
\end{tabular}      
\caption{Our tangent ReLU extends ReLU from the real line to the complex plane $\nzC$, by applying ReLU in the tangent space of $\cmanifold$.   It rectifies the magnitude by $1$ and phase by $0$, creating four corresponding regions before (left) and after (right) the mapping.
{\bf\textcolor{darkgreen}{Green region}}: Points retain both their magnitudes and phases.  %
{\bf\textcolor{cyan}{Cyan region}}: Points retain their phases with their magnitudes rectified to 1.  %
{\bf\textcolor{red}{Red region}}: Points retain their magnitudes with their phases rectified to 0.  %
{\bf\textcolor{brown}{Brown region}}: Points are rectified in both the magnitude and the phase, all to the same point $1 + i \cdot 0$.  %
\label{fig:tReLU}
}
\end{figure}
}

\def\figDistInv#1{
\begin{figure}[#1]\centering
\includegraphics[width=0.48\textwidth]{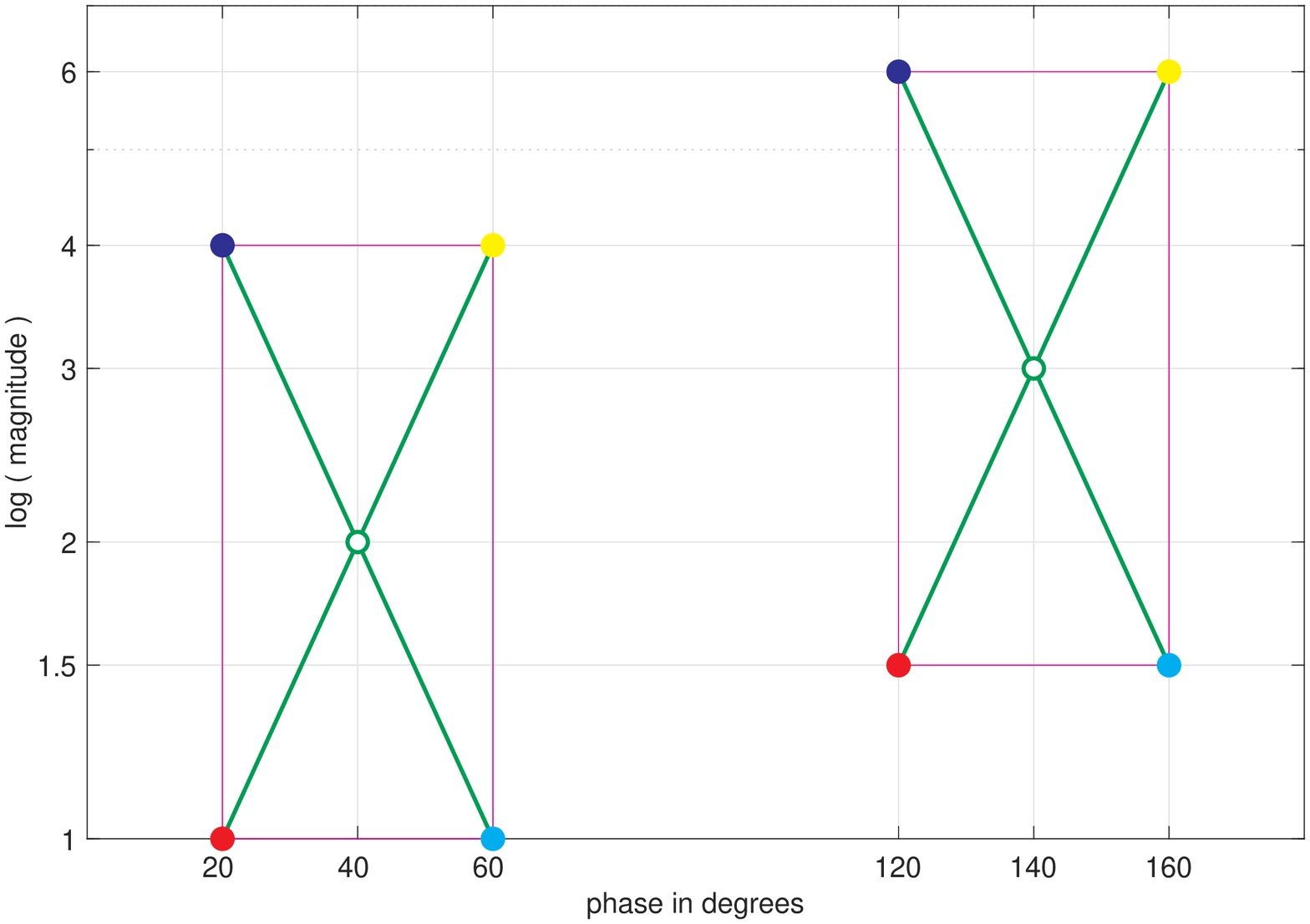}
\caption{Our distance transform is invariant to complex-valued scaling.  We plot Fig. \ref{fig:Eqvar} in the
tangent space of $\cmanifold$, with magnitude on a log scale for the $y$-axis and  phase for the $x$-axis.
The distance $d$ on the manifold is directly measured by the Euclidean distance in this space.   When the 4 complex numbers are multiplied by $1.5\exp\left(i\,\frac{100\pi}{180}\right)$, the points as well as their wFM are simply translated.  The distances between the points and their wFM thus remain the same.  Note: While this direct phase $\theta$ representation is intuitive, our rotation matrix $R(\theta)$ can more easily handle phase representation discontinuity at e.g. $\pm \pi$, where $\theta(-\pi,+\pi]$ for a unique determination of $\theta$.
\label{fig:DistInv}  
}
\end{figure}
}

\def\figResCNN#1{
\begin{figure}[#1]
\includegraphics[trim=0 0 10 0, width=0.48\textwidth]{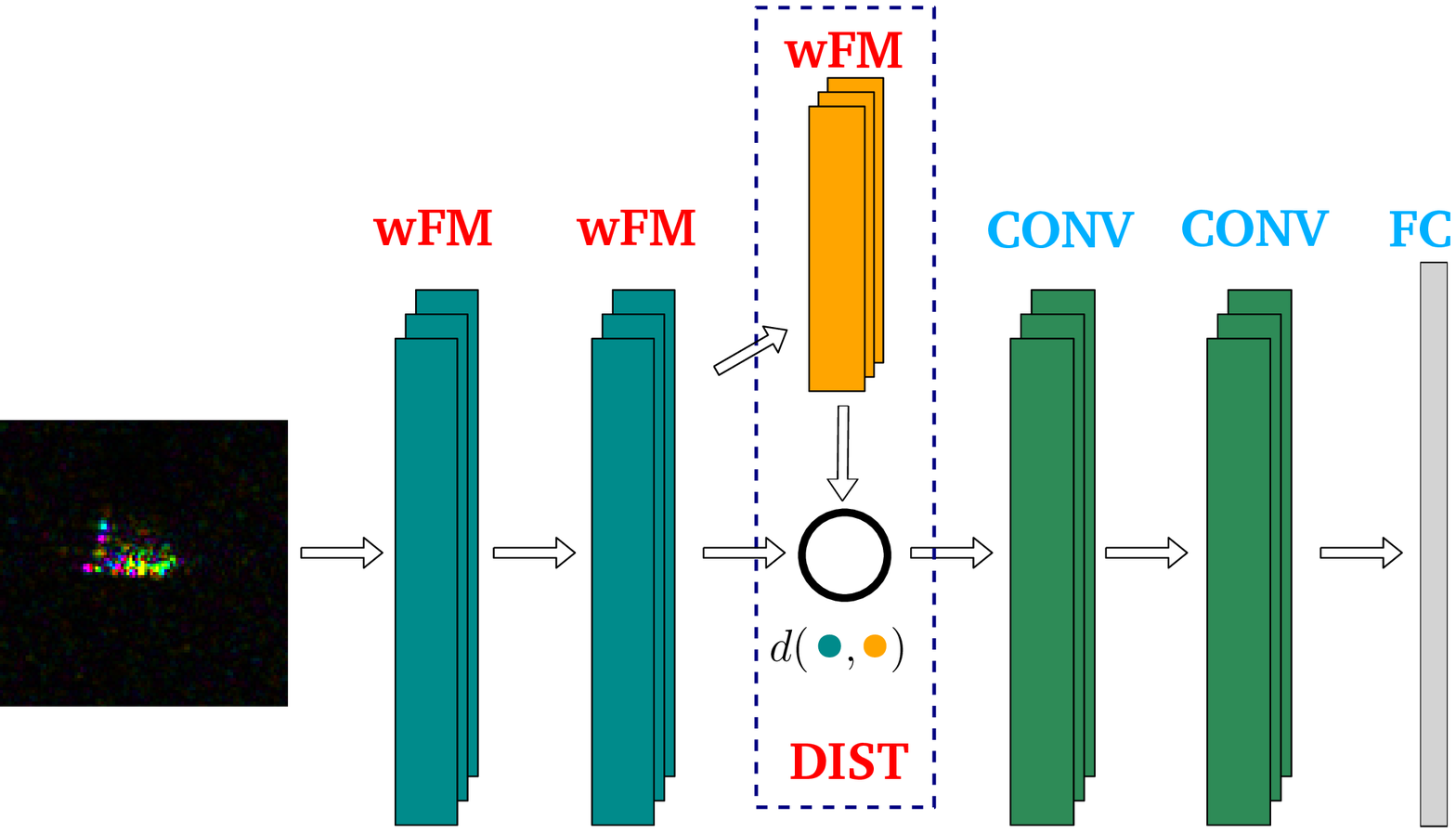}\\
\includegraphics[trim=0 0 5 0, width=0.48\textwidth]{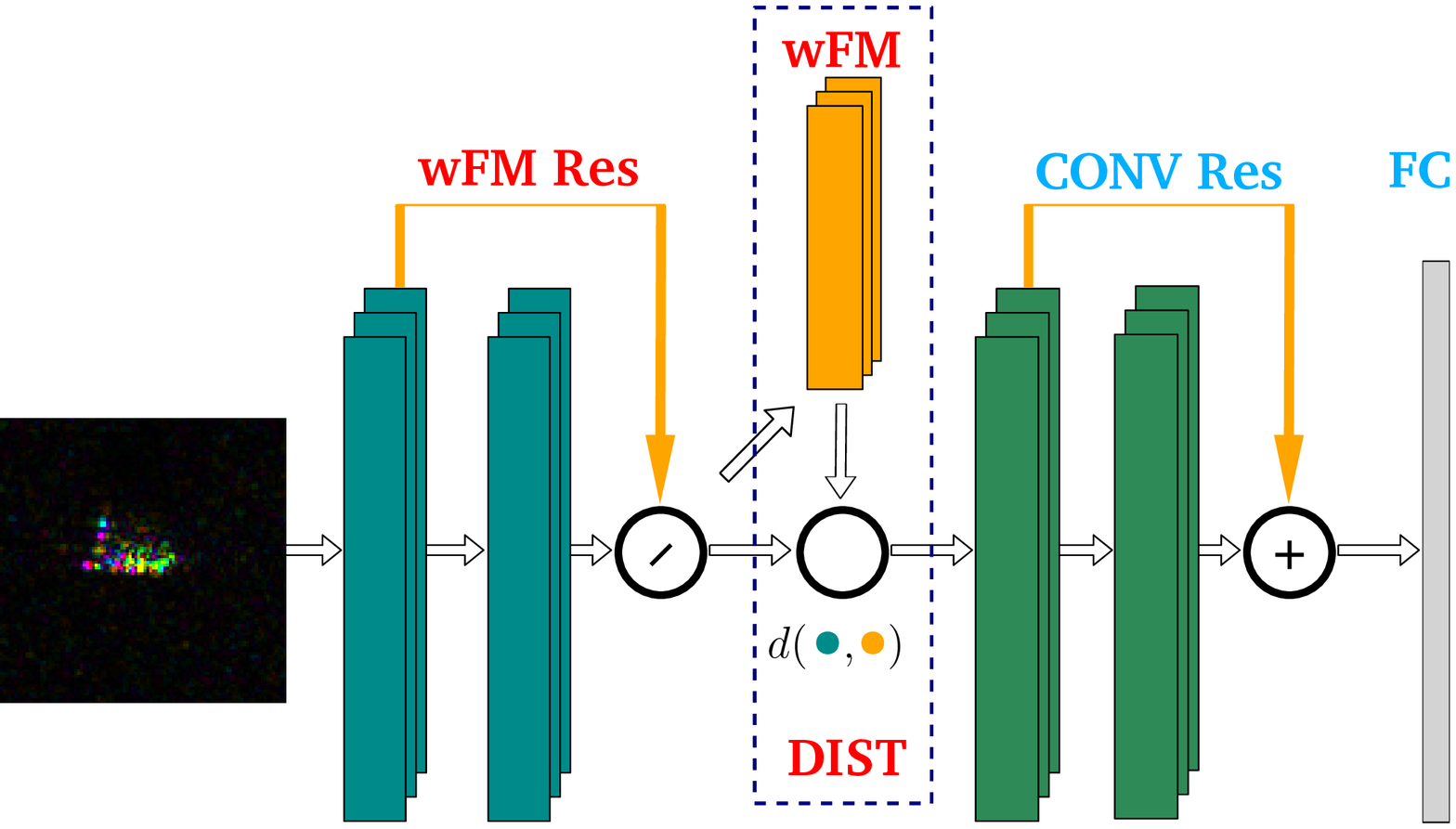}\\
\caption{Our complex-valued CNN with residual connections allows manifold valued representations to be combined across different depths. {\bf Top)} A sample CNN with complex-valued convolution (wFM), distance transform (DIST), real-valued convolution (CNN), and fully connected (FC) layers.
{\bf Bottom)} To create a residual structure (Res), we add skip connections between two adjacent convolutional layers.  We first spatially align their feature maps via convolution (wFM for complex-valued and CONV for real-valued) and then combine them via channel-wise  concatenation for complex-valued data and addition for the real-valued data.  The only difference between real-valued and complex-valued residual blocks is how the feature maps are combined: Addition is a vector space operator that does not apply to two points on a non-Euclidean manifold.
}
\label{fig:ResCNN}
\end{figure}
}

\section{CNN Layer Functions on Complex Manifold}
\label{theoryCNN}

We propose complex-valued CNN layer functions based on the scaling-rotation manifold view of complex numbers.  Each layer function transforms the representation with a certain property (e.g. equivariance or invariance) on the manifold.

We define first a convolution operator on the manifold that is {\it equivariant} to complex-valued scaling.  We then define nonlinear activation functions  and fully-connected layer functions that are {\it invariant} to complex-valued scaling.  A CNN composed with such layer functions would then be intrinsically invariant to complex-valued scaling.

\subsection{Complex-Valued Convolutional Layer Function}

The standard convolution, denoted by $\conv$ and defined by $n$ weights $\{w_k\}$ over $n$ neighbouring points $\{x_k\}$,  is simply the weighted average of real numbers in the Euclidean space:
\begin{align}
\{w_k\} \conv \{x_k\} = \sum_{k=1}^n w_k x_k.
\end{align}
We extend this concept to points on a manifold.

\noindent
{\bf Fr\'{e}chet mean on the manifold.} 
The weighted average of $n$ points on a Riemannian manifold is called the {\it weighted Fr{\'e}chet mean} (wFM) \citep{Frechet1948elements}.  We define the complex convolution, denoted by $\zconv$, as wFM on the scaling-rotation manifold for complex values (center circles in Fig. \ref{fig:Eqvar}):
\begin{align}
 \label{eq:zconv}
  \{w_k\} \zconv \{\cz{z}_k\} &= \text{wFM}(\{w_k\}, \{\cz{z}_k\})\\
\text{wFM}(\{w_k\}, \{\cz{z}_k\}) :&=\arg\min_{\cz{m}\in\nzC} \sum_{k=1}^n w_k d^2(\cz{z}_k,\cz{m})\\
\sum_{k=1}^{n} w_k &= 1, \quad w_k \ge 0 
\end{align}
where $d$ is the manifold distance in Eq. \eqref{eq:mdist}.  

We contrast our complex-valued wFM convolution  $\zconv$ with the standard real-valued convolution $\conv$.
\setlength{\itemsep}{4pt}
\begin{itemize}[leftmargin=*]
\item 
  While the output of $\zconv$ is complex-valued, the weights $\{w_k\}$ are real-valued, just like the weights for $\conv$.

\item While the weights of $\conv$ can be arbitrary, the weights of $\zconv$ are all nonnegative and summed up to 1.  This convexity constraint ensures that the wFM of $n$ points on a manifold stays on the manifold.
  
\item While the output of $\conv$ is simply the weighted average, the output of $\zconv$ is the minimizer to a weighted least squares problem, i.e., the data mean that minimizes the weighted variance.  There is no closed-form solution to wFM; however, there is a provably convergent $n$-step iterative solution \citep{iFME15}. 

\item If $d$ is the manifold distance in Eq. \eqref{eq:edist} for the Euclidean space which is also Riemannian, then  wFM has exactly the weighted average as its closed-form solution.  That is, our wFM convolution on the Euclidean manifold is reduced to the standard convolution, although with the additional convexity constraint on the weights.
\end{itemize}

\noindent
{\bf Convolutional wFM layer.}  As for the standard convolution, our weights $\{w_k\}$ for wFM are parameters learnable through stochastic gradient descent (SGD), with the additional convexity constraint on $\{w_k\}$. 

Each set of weights $\{w_k\}$ defines a single wFM channel in our convolutional layer, and each layer has multiple channels.  We follow the CNN convolution convention for images, where the convolutional kernel spans a local spatial neighbourhood but all the channels.  If there are 10 input channels of $50\times 50$ pixels, to produce 20 output channels with $5\times 5$ spatial kernels, we need to learn 20 sets of $5\times 5\times 10$ weights.

\figEqvar{tp}

\noindent
{\bf Equivariance of wFM to $\cmanifold$.}
We have shown that the scaling-rotation Lie group transitively acts on $\nzC$ and is isometric (Prop. \ref{theory:prop2}).  We use this result to prove that our wFM is equivariant to complex-valued scaling.
\begin{proposition}
\label{theory:prop3}
The complex-valued convolution $\zconv$ in Eq. \eqref{eq:zconv} is equivariant to the action of $\cmanifold$: $\forall g\in \cmanifold$,
\begin{align}
\{w_k\} \zconv \{g.\cz{z}_k\}  = g.\left (\{w_k\} \zconv \{\cz{z}_k\}\right).
\end{align}
\end{proposition}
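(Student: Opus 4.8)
The plan is to reduce the equivariance of wFM to the isometry property of Proposition~\ref{theory:prop2}. The key observation is that the defining variational characterization of wFM in Eq.~\eqref{eq:zconv} depends on the data only through squared manifold distances, and the group action of $\cmanifold$ on $\nzC$ preserves those distances. First I would fix $g\in\cmanifold$ and let $\cz{m}^\star = \{w_k\}\zconv\{\cz{z}_k\}$, i.e., $\cz{m}^\star = \arg\min_{\cz{m}\in\nzC}\sum_k w_k\, d^2(\cz{z}_k,\cz{m})$. The claim is that $g.\cz{m}^\star$ is the minimizer of the objective $J_g(\cz{m}) := \sum_k w_k\, d^2(g.\cz{z}_k,\cz{m})$.

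Next I would exploit that the action of $g$ is a bijection on $\nzC$ (each element of $\cmanifold$ has an inverse acting as the inverse map, by the compatibility and identity axioms in Definition~\ref{def2}), so that as $\cz{m}$ ranges over $\nzC$, so does $g^{-1}.\cz{m}$, and we may reparametrize $\cz{m} = g.\cz{p}$ with $\cz{p}$ ranging over all of $\nzC$. Then
\begin{align*}
J_g(g.\cz{p}) = \sum_{k=1}^n w_k\, d^2(g.\cz{z}_k,\, g.\cz{p}) = \sum_{k=1}^n w_k\, d^2(\cz{z}_k,\cz{p}),
\end{align*}
where the last equality is exactly Proposition~\ref{theory:prop2} applied termwise. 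Hence $\arg\min_{\cz{m}} J_g(\cz{m}) = g.\bigl(\arg\min_{\cz{p}} \sum_k w_k\, d^2(\cz{z}_k,\cz{p})\bigr) = g.\cz{m}^\star$, which is precisely $\{w_k\}\zconv\{g.\cz{z}_k\} = g.(\{w_k\}\zconv\{\cz{z}_k\})$.

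The only subtle point — and the part I would be most careful about — is the uniqueness/well-definedness of the minimizer: the argmin in Eq.~\eqref{eq:zconv} is written as if it were a single point, so I would either invoke that wFM is unique on this manifold (the relevant Cartan--Hadamard-type condition holds because $\cmanifold$ with the product metric is a flat, simply connected, nonpositively curved space, so $d^2(\cz{z}_k,\cdot)$ is strictly convex along geodesics and the weighted sum has a unique minimizer), or, if one prefers to sidestep curvature arguments, state the equivariance as an identity of minimizer \emph{sets}, $\operatorname*{arg\,min}_{\cz{m}} J_g(\cz{m}) = g.\operatorname*{arg\,min}_{\cz{p}} J_e(\cz{p})$, which follows from the displayed reparametrization alone and needs no uniqueness. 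I would present the clean bijection-plus-reparametrization argument as the main body of the proof and relegate the uniqueness remark to a parenthetical, since everything of substance is already supplied by Proposition~\ref{theory:prop2}.
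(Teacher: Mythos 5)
Your argument is correct and is essentially the paper's own proof: both reduce equivariance to the isometry of Proposition~\ref{theory:prop2} plus the fact that $g$ acts bijectively on $\nzC$, so the wFM objective evaluated at $g.\cz{m}$ over the transformed data equals the original objective at $\cz{m}$; the paper compresses this into a one-line inequality chain (leaving the reparametrization $\cz{m}'=g.\cz{m}$ implicit), while you spell it out. One caveat on your uniqueness aside: $\cmanifold$ is \emph{not} simply connected ($\SO$ is a compact circle), so the Cartan--Hadamard argument does not apply and the weighted Fr\'echet mean can genuinely fail to be unique (e.g., two antipodal phases); your set-valued fallback is the correct way to handle this, and the paper itself silently ignores the issue.
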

\begin{proof}
  Let $g\in\cmanifold$ and $\cz{o}=\{w_k\} \zconv \{\cz{z}_k\}$.
$\forall \cz{m}\in \nzC$,
\begin{align*}
&\sum_{k=1}^n w_k d^2(g.\cz{z}_k,g.\cz{o}) = \sum_{k=1}^n w_k d^2(\cz{z}_k,\cz{o})\le \sum_{k=1}^n w_k d^2(\cz{z}_k,\cz{m})
\end{align*}
since $g$ perserves the distance and $\cz{o}$ is the minimizer over $\{\cz{z}_k\}$.  Therefore, $g.\cz{o}$ is the minimizer  over $\{g.\cz{z}_k\}$.
\end{proof}

Fig. \ref{fig:Eqvar} illustrates how wFM is equivariant to rotation and scaling.  For each trapezoid, the center circle marks the wFM of the four corner points.  If the trapezoid is transported using a particular scaling and rotation action, then the center wFM is also transported by the same action.

\subsection{Nonlinear Activation Functions}
The wFM convolution is a contractive mapping, an effect of a nonlinear activation function.  Nevertheless, for
stronger nonlinearity and acceleration in optimization during learning, we propose two activation functions from the manifold perspective: tangent ReLU and $G$-transport.

\noindent
{\bf Tangent ReLU (tReLU).}
The tangent space of a manifold is a vector space that contains the possible directions for tangentially passing through a point on the manifold.  It could be regarded as a local Euclidean approximation of the manifold.   A pair of  logarithmic and exponential maps establish the correspondence between the manifold and the tangent space.

We extend ReLU to the complex plane $\nzC$ by applying ReLU in the tangent space of $\cmanifold$ manifold.   Our tangent ReLU is composed of three steps.
\begin{enumerate}
\item Apply logarithmic maps to go from a point in $\nzC$ to a point in its tangent space.  The mapping is $\log$ for $r\in\mathbf{R}^+$,  and $\logm$ for $R(\theta) \in \SO$, which produces a skew symmetric matrix.  We choose the principal log map for $\SO$ in the range of $\theta \in (-\pi,\pi]$:
\begin{align}
\logm(R(\theta)) = \theta\cdot\begin{bmatrix*}[r]0&-1\\1&0\end{bmatrix*}.
\end{align}
\item Apply ReLU in the tangent space of $\nzC$.  ReLU is well defined for a real-valued scalar such as $\log(r)$ for $\mathbf{R}^+$:
\begin{align}
\relu(x) = \max(x,0),\quad \text{e.g.}, x = \log(r).
\end{align}    
We extend ReLU to $\logm(R(\theta))$ for $\SO$, since it is just the real-valued $\theta$ scaled by a constant matrix:
\begin{align}
\relu(\logm(R(\theta))) = \max(\theta,0)\cdot\begin{bmatrix*}[r]0&-1\\1&0\end{bmatrix*}.
\end{align}    
\item Apply exponential maps to come back to $\nzC$ from the tangent space. 
We can simplify the 3-step tReLU as:
\end{enumerate}
\begin{align}
r \!\overset{\text{tReLU}}{\mapsto}\!
&\exp\left(\relu(\log(r))\right)=\max(r,1)
\\
R(\theta) \!\overset{\text{tReLU}}{\mapsto}\!
&\expm\left(\relu(\logm(R(\theta)))\right)\!=\!R(\max(\theta,0))
\end{align}
Fig. \ref{fig:tReLU} shows that our manifold perspective of $\nzC$ leads to a non-trivial extension of ReLU from the real line to the complex plane, partitioning $\nzC$ into four regions, separated by $r=1$ and $\theta=0$: Those with magnitudes smaller than 1 are rectified to 1, and those with negative phases are rectified to 0.

\figtReLU{tp}

\noindent
{\bf $G$-transport.}
A nonlinear activation function is a general mapping that transforms the range of responses.  
We consider a general alternative which simply transports all the values in a feature channel via an action in the group $G=\cmanifold$.  We only need to learn one global scaling and rotation per feature channel, which corresponds to learning one complex-valued multiplier per channel at a certain depth layer.

\subsection{Fully-Connected Layer Functions} 
For classification tasks, having equivariance of convolution and range compression of nonlinear activation functions are not enough; we need the final representation of a CNN to be invariant to variations within each class.  

In a standard CNN classifier, the entire network is invariant to the action of the group of translations, achieved by the fully connected (FC) layer.  Likewise, we develop a FC layer  function on $\nzC$ that is invariant to the action of $\cmanifold$. 

Since the manifold distance $d$ is invariant to actions in $G=\cmanifold$, we propose the distance between each point of a set and their weighted Fr{\'e}chet mean, which is equivariant to $G$, as a new FC function on $\nzC$.  

\noindent
{\bf Distance transform FC Layer.}
Let the input be $s$ pixels of $c$ channels each.  We perform a global integration over all these $s\!\cdot\! c$ complex values $\{\cz{t}_k\}$.  Given $s\!\cdot\! c$ weights $\{w_k\}$ for these individual numbers, we first calculate their wFM $\cz{m}$ and then compute the distance $u_k$ from $\cz{t}_k$ to the mean $\cz{m}$:
\begin{align}
  \cz{m} &= \{w_k\} \zconv \{ \cz{t}_k\}
\label{eq:distm}\\
  u_{k} &= d(\cz{t}_k, \cz{m}).
\label{eq:distd}
\end{align}
The output is real-valued and of the same size as the input.   The weights $\{w_k\}$ are the parameters to be learned and there could be multiple sets of such weights at this layer.

\figDistInv{tp}

\begin{proposition}
\label{theory:prop4}
The distance to the wFM, defined in Eq. \eqref{eq:distm} and Eq. \eqref{eq:distd}, is invariant to the action of $G=\cmanifold$.
\end{proposition}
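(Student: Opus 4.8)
The plan is to chain together the two properties already established for the scaling--rotation group: the isometry of Proposition~\ref{theory:prop2} and the equivariance of wFM in Proposition~\ref{theory:prop3}. Fix an arbitrary $g\in\cmanifold$ and consider the transformed inputs $\{g.\cz{t}_k\}$. First I would track what happens to the weighted Fr\'echet mean: by the equivariance of $\zconv$ (Prop.~\ref{theory:prop3}), the wFM of the transformed inputs is exactly $g.\cz{m}$, where $\cz{m}=\{w_k\}\zconv\{\cz{t}_k\}$ is the wFM of the original inputs. In other words, applying $g$ before or after computing the wFM gives the same point on $\nzC$.

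Next I would compute the transformed distances. For each $k$, the new output is $d(g.\cz{t}_k,\, g.\cz{m})$, since the wFM of the transformed data is $g.\cz{m}$. Now invoke Proposition~\ref{theory:prop2}: the action of $g$ preserves the manifold distance $d$, so $d(g.\cz{t}_k, g.\cz{m}) = d(\cz{t}_k, \cz{m}) = u_k$. Hence every $u_k$ is unchanged by the action of $g$, which is precisely the claimed invariance of the distance transform to $G=\cmanifold$.

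There is essentially no hard step here; the content has already been front-loaded into Propositions~\ref{theory:prop2} and~\ref{theory:prop3}. The only point that warrants a line of care is the well-definedness of $\cz{m}$: wFM is defined as an $\arg\min$, so one should note that the minimizer used on the left-hand side is $g.\cz{m}$ \emph{with the same $\cz{m}$} as on the right-hand side, which is exactly what Proposition~\ref{theory:prop3} guarantees. (If one were worried about non-uniqueness of the Fr\'echet mean, the equivariance statement should be read as an identity of the algorithmic output, e.g.\ the fixed point produced by the convergent iteration of \citep{iFME15}, and the same remark applies; this does not affect the distance-preservation argument.) So the write-up is just: apply Prop.~\ref{theory:prop3} to move $g$ past $\zconv$, then apply Prop.~\ref{theory:prop2} to cancel $g$ inside $d$, and conclude $u_k$ is invariant.
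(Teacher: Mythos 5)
Your proposal is correct and follows exactly the paper's own argument: apply the wFM equivariance of Proposition~\ref{theory:prop3} to replace the mean of the transformed data by $g.\cz{m}$, then apply the isometry of Proposition~\ref{theory:prop2} to cancel $g$ inside the distance. Your added remark on the well-definedness of the $\arg\min$ is a careful extra observation but does not change the route.
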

\begin{proof}
Per Propositions \ref{theory:prop2} and \ref{theory:prop3}, $\forall g\in G$, we have:
\begin{align*}
& d\left(g.\cz{t}_k, \{w_k\}\zconv \{g.\cz{t}_k\}\right)&\\
=\,&d\left(g.\cz{t}_k, g. \left(\{w_i\})\zconv \{\cz{t}_k\}\right)\right) 
&\text{equivariance of wFM}\\
=\,&d\left(\cz{t}_k, \{w_k\}\zconv\{\cz{t}_k\}\right)
&\text{invariance of distance}
\end{align*}
completing the proof.
\end{proof}
Fig. \ref{fig:DistInv} re-plots Fig. \ref{fig:Eqvar}  in the $(\log(r),\theta)$ space, which corresponds to the tangent space of $\cmanifold$ where the manifold distance can be directly visualized as the Euclidean distance.   When the four corners of the trapezoid are scaled and rotated, the trapezoid is simply translated along $(\log(r),\theta)$ axes.  The distance from points to their wFM   remain the same.

Since the output of the distance transform layer is real-valued, we can subsequently use any existing layer functions of a real-valued CNN classifier.  Fig.\ref{fig:overview} shows a sample architecture of our complex-valued CNN, where two successive wFM convolutional layers are followed by a distance transform FC layer, a standard convolutional layer, and an FC layer for final softmax classification.

%{\bf RotLieNet:} With the above building blocks, we have a complex valued CNN framework (dubbed as RotLieNet) which is invariant to the action of $G$ ({ The name is due to fact that our proposed network is invariant to a matrix Lie group  \citep{hall2015lie} })A schematic description of this network is given in Fig. \ref{fig1}. 
%\begin{figure}[!ht]\centering         \includegraphics[scale=0.4]{Figures/manifoldnet_fig.eps}
%               \caption{\footnotesize Schematic diagram on RotLieNet}\label{fig1}
%\end{figure}

\figResCNN{tp}

\subsection{Complex-Valued Residual Layer Function}

A standard CNN with residual layers such as ResNet \citep{he2016deep} outperforms the one without.  Residual layers are useful for preventing exploding/vanishing gradients in deep networks, by utilizing skip connections to jump over some layers.  The skip connections between layers add the outputs from previous layers to the outputs of stacked layers. 

While addition is natural for combining layers in the field of real numbers, it does not make sense in the field of complex numbers:  We can add two vectors in the Euclidean space, but we cannot add two points on a non-Euclidean manifold.

Here we propose a complex-valued residual layer function by retaining
the skip connection concept without the addition to combine the
outputs from different layers.  Let a feature layer $\cz{f}(s,c)$ be
specified by the number of pixels $s$ and the number of channels $c$.
Consider two feature layers $\cz{f}_1(s_1,c_1)$, $\cz{f}_2(s_2,c_2)$,
$s_1<s_2$, with one layer through skip connections.   In order to
combine them, we first use the wFM convolution to bring the spatial
dimension of $\cz{f}_2$ from $s_2$ to $s_1$ and then concatenate the
two sets of spatially aligned features:
\begin{align}
\text{align spatially: } &  \cz{f}_2(s_2,c_2) \overset{\zconv}{\to} \cz{\bar{f}}_2(s_1,c_2)\\
\text{concatenate: }& [\cz{f}_1(s_1,c_1) \, |\, \cz{\bar{f}}_2(s_1,c_2)]  \to \cz{f}(s_1,c_1+c_2)
\end{align}
Once combined, we can treat them as the input 
and apply any wFM convolution as desired.

Fig. \ref{fig:ResCNN} shows that 
we can simply replace two complex-valued convolution layers with such a residual block connecting and combining their outputs, and  build a residual complex-valued convolution network.  The only difference with the real-valued residual block is that the combination is channel-wise concatenation for a non-Euclidean manifold instead of addition for a vector space.

We can optionally further reduce the number of parameters for
convolution using the tensor ring decomposition \citep{zhao2016tensor}.  A $c$-dimensional
convolutional filter  $W$ of size $n_1\times\cdots\times n_c$ can be
decomposed into $c$ smaller rank $b$ tensors, each of the form
$T_k$ with size $b\times  n_k \times b$ such that  $\forall k_1, \cdots k_c$,
\begin{align}
W\!\!\left(k_1, \cdots, k_c\right) \!=\! \text{trace}\left(T_1(:,k_1,:)\!\times\! \cdots\! \times\! T_c(:,k_c,:)\right)
\end{align}
where $\times$ denotes matrix multiplication.  Such a tensor
factorization needs $b^2\sum_{k=1}^c n_k$ parameters for all the
tensors $\{T_k\}$ instead of $\prod_{k=1}^c n_k$ parameters for the
original $W$.   Tensor ring decomposition can achieve arbitrary approximation precision
(Theorem 2.2 on pp. 2299 in \citep{oseledets2011tensor}).

\def\tabMstarSize#1{
\begin{table}[#1]
    \centering
    \caption{
    MSTAR Dataset Size: Number of Images Per Class
    \label{tab:MstarSize}
    }
    \setlength{\tabcolsep}{4pt}
    \begin{tabular}
    %{c|c|c|c|c|c|c|c|c|c|c}
    {ccccccccccc}
    \toprule
C0&C1&C2&C3&C4&C5&C6&C7&C8&C9&C10\\
\midrule
1285& 429& 6694& 451& 1164& 1415& 573& 572& 573& 1401& 1159\\
\bottomrule
    \end{tabular}
\end{table}
}

\def\tabMstarAccuB#1{
\begin{table}[#1]
\centering
\caption{
Model Accuracy Comparison on MSTAR-L and MSTAR-S
}
\label{tab:MstarAccuB}
\begin{tabular}{l|c|c|c|c}
\toprule
\bf Test Accuracy (\%) & \textbf{ResNet50} & \textbf{DCN} &  \textbf{SurReal}&  \textbf{SurReal-Res}\\
\midrule
{MSTAR-L} & $99.1$ & $98.9$  & $99.1$ & $\bf 99.2$\\
\midrule
{MSTAR-S} & $97.4$ & $93.3$ & $97.7$ & $\bf 98.4$ \\
\midrule
{Difference S$-$L} & $-1.7$  & $-6.6$ & $-1.4$ & $\bf -0.8$ \\
\bottomrule
\end{tabular}\\[5pt]
\begin{tabular}{@{}p{0.48\textwidth}@{}}
Our SurReal CNN and its residual version consistently outperform the real-valued baseline ResNet50 and complex-valued baseline DCN.  SurReal-Res retains the  accuracy the most when the overall data size is reduced, demonstrating better generalization capability from small data.
\end{tabular}
\end{table}
}

\def\figMstarConf#1{
\begin{figure}[#1]\centering
\setlength{\tabcolsep}{0pt}
\begin{tabular}{cc}
SurReal & SurReal-Res\\
\includegraphics[clip,trim=60 10 82 22, width=0.24\textwidth]{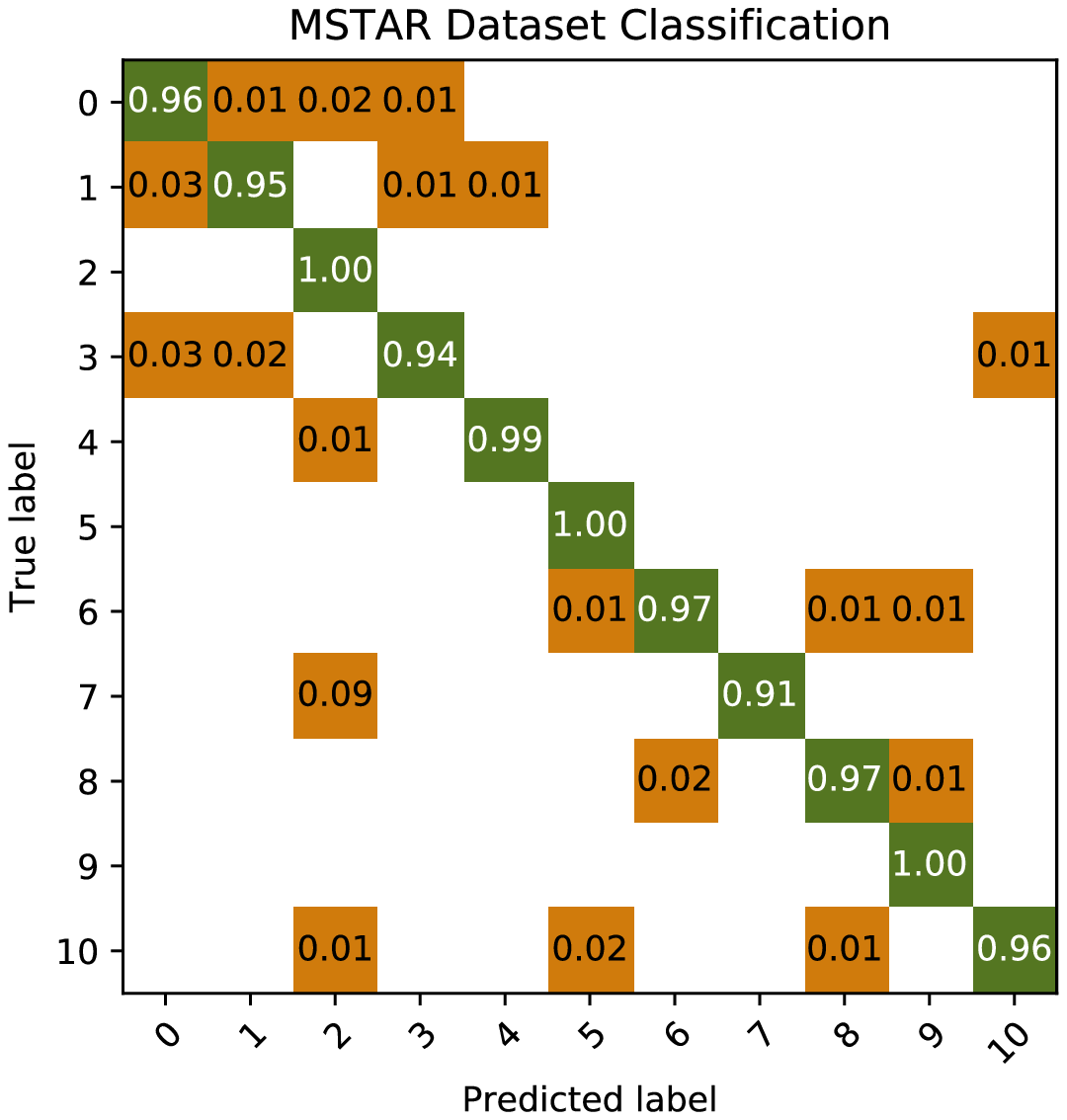}&
\includegraphics[clip,trim=60 10 82 22, width=0.24\textwidth]{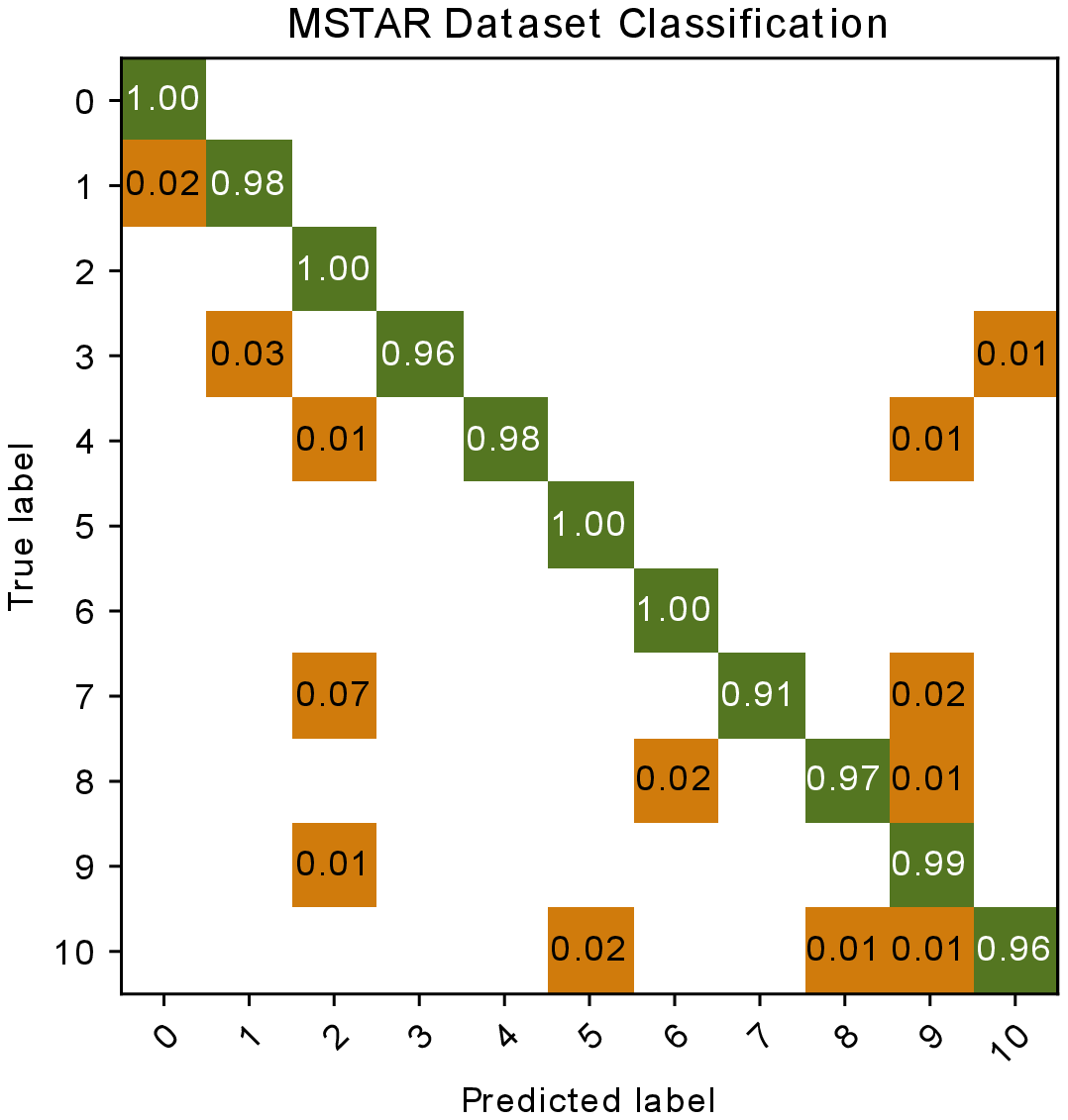}\\
\end{tabular}
\caption{The confusion matrix for our SurReal CNN (left) and SurReal-Res CNN (right) on MSTAR-S.  {\bf 1)} The dataset is small and highly imbalanced across classes (Fig. \ref{fig:mstarSets} Right).  The model achieves better accuracy for classes with more instances: The accuracy is 100\% for C2 and 94\% for C1 and C3.  Overall, the accuracy gap is small, considering the size of C1 and C3 is only 8\% of the size of C2.
{\bf 2)} The residual connections help further clear up the confusion between classes.  The matrix becomes more strongly diagonal.
}
\label{fig:MstarConf}
\end{figure}
}

\def\figMstarConvResponse#1{
\begin{figure*}[#1]\centering
\includegraphics[width=0.99\textwidth]{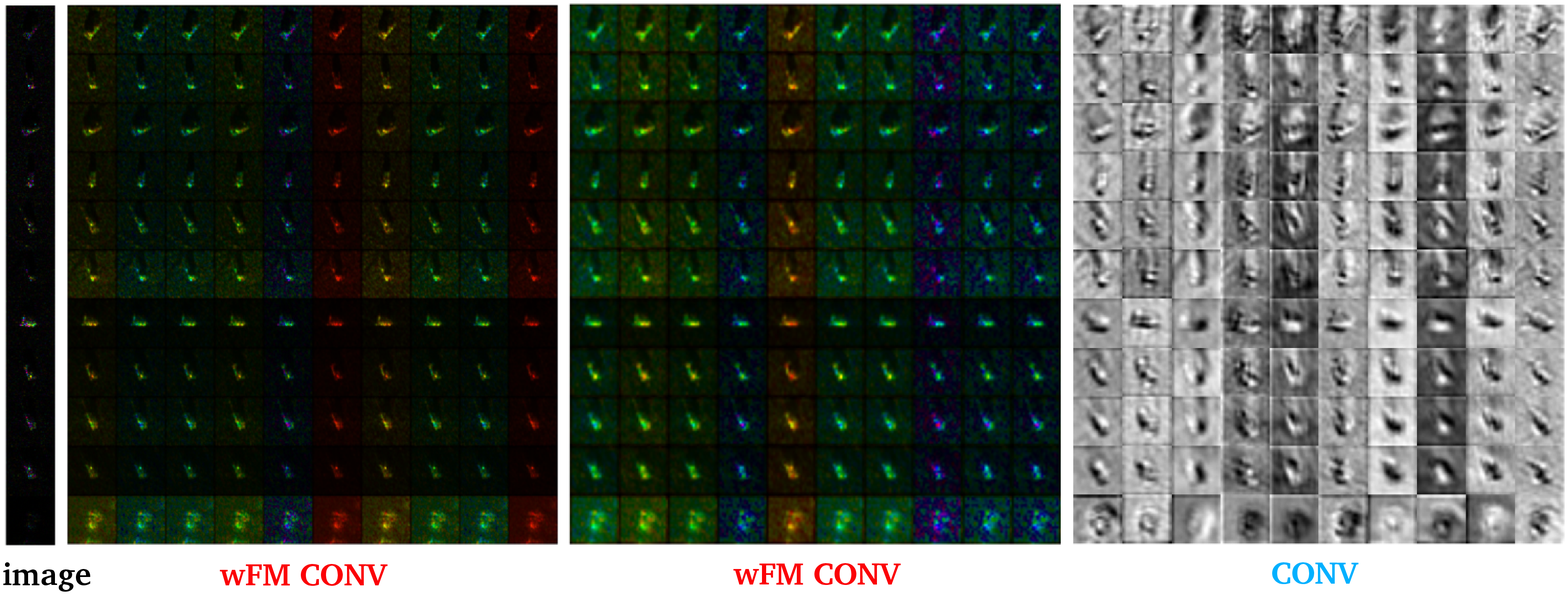}
\caption{Sample channel responses of our SurReal CNN on MSTAR-S images.  {\bf 1)} There are 11 rows, with Row $k$ containing an instance for class $k$.  Column 1 shows the input image.  Columns 2-4 contain 10 channel responses in the first three convolutional layers: the first two are complex-valued wFM convolution and the last is the real-valued convolution.  {\bf 2)}  All the complex values are displayed in fully saturated color using the cyclic HSV colormap, with the magnitude as the intensity and the phase as the hue, whereas the real values are displayed in gray using the grayscale colormap.  As the input image goes through the convolutional layers in the SurReal CNN, the distinction between instances in individual classes becomes clearer.}
\label{fig:MstarConvResponse}
\end{figure*}}

\def\figMstarModelSizes#1{
\begin{figure}[#1]\centering
\includegraphics[width=0.48\textwidth]{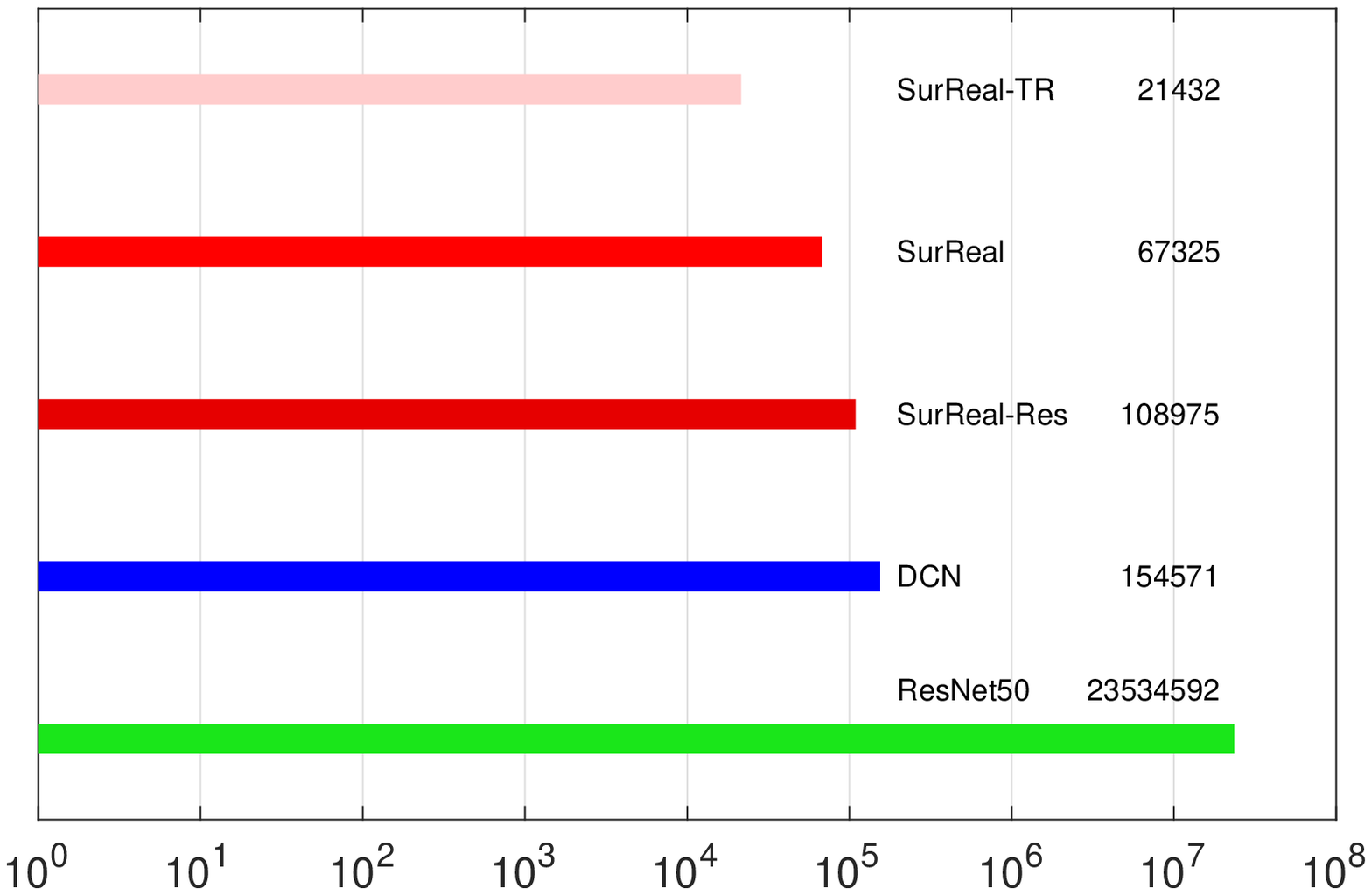}
\caption{
Our SurReal models for MSTAR are much leaner than baselines.
Each model size is plotted as a horizontal bar on a log scale,  labeled with the model name and the number of parameters on the right.
ResNet50 is the real-valued baseline and the largest  with 23.5M parameters.  DCN is the complex-valued baseline,  with 154K parameters at 0.7\% of ResNet50.  Our SurReal residual network has 109K parameters at 0.5\% (71)\% of ResNet50 (DCN).   Our basic SurReal CNN has 67K parameters at 0.3\% (44\%) of ResNet50 (DCN).  With the tensor ring  implementation for convolutions, our SurReal CNN could be further reduced to 21K parameters at 0.1\% (14\%) of ResNet50 (DCN).  }
\label{fig:MstarModelSizes}
\end{figure}
}
    
\def\figMstarAccuA#1{
\begin{figure}[#1] \centering
\includegraphics[width=0.48\textwidth]{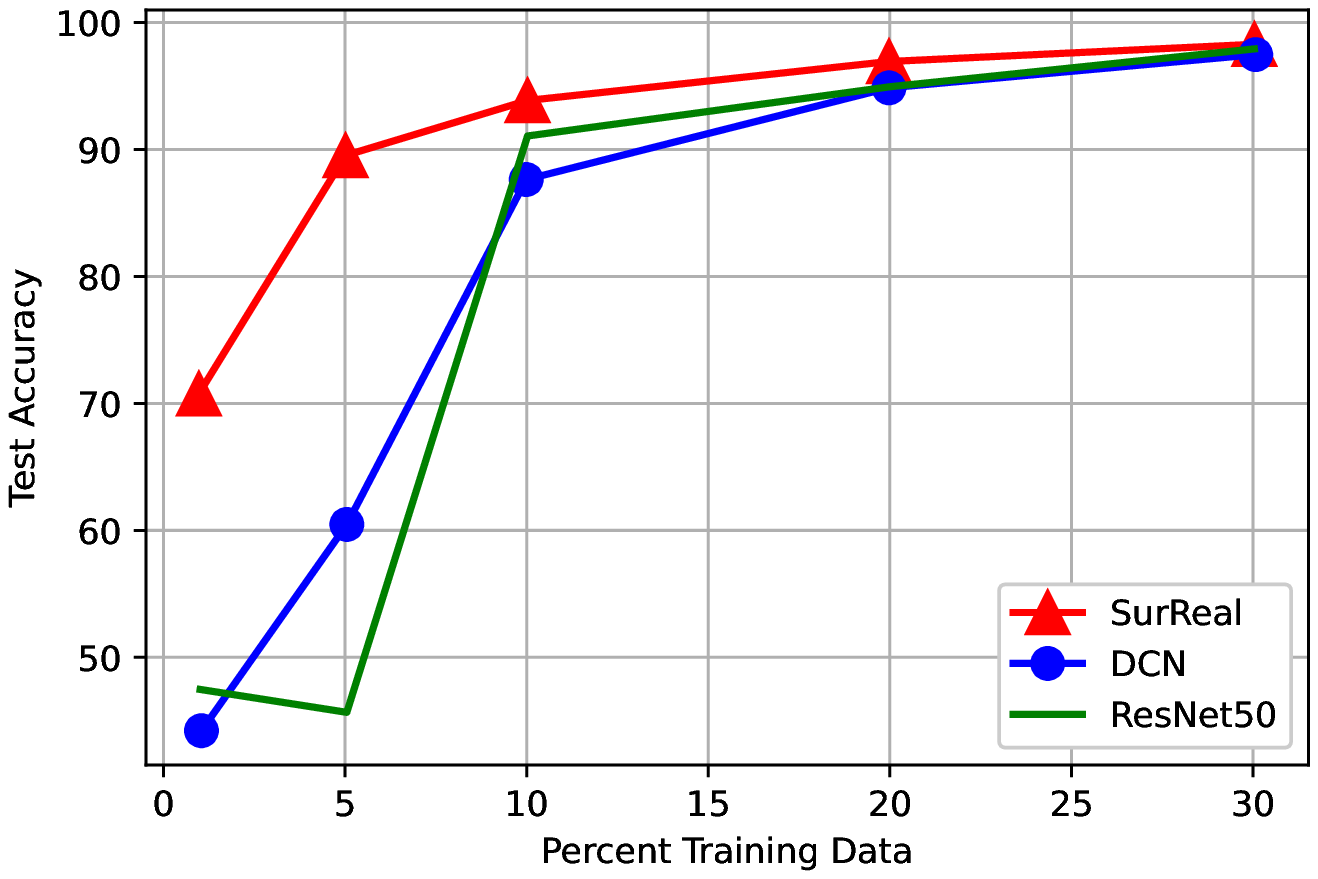}\\
\caption{Our SurReal CNN significantly outperforms real-valued and complex-valued baselines on MSTAR target recognition, when less data is used for training.  We split the data into  training and test sets at varying proportions and plot the test accuracy.  When there is enough training data, e.g., at 30\% training and 70\% testing, all three models perform similarly with  98\% accuracy.  When there is less training data, e.g., at 5\% training and 95\% testing, SurReal is much better with a test accuracy of 90\% over DCN's 60\% and ResNet50's  45\%, demonstrating the effectiveness of our complex-valued model.
}
\label{fig:MstarAccuA}
\end{figure}
}

\def\tabSurRealNet#1{
\begin{table}[#1]\centering
\caption{SurReal CNN in Detailed Layer Specification}
\setlength{\tabcolsep}{7pt}
\begin{tabular}{@{}lrccr@{}}
 \toprule
 {\bf Layer Type} & {\bf Input Shape} & {\bf Kernel} & {\bf Stride} & {\bf Output Shape}\\
  \midrule
\bf\color{Red} wFM CONV&\([2,1,100,100]\)	& $5\times 5$	&2   &\([2,20,48,48]\)\\
\midrule 
\bf\color{RoyalBlue} $G$-transport&\([2,20,48,48]\)   & -         & -       &\([2,20,48,48]\)\\
\midrule 
\bf\color{Red} wFM CONV &\([2,20,48,48]\)   & $5\times 5$   &2   &\([2,20,22,22]\)\\
\midrule 
\bf\color{RoyalBlue} $G$-transport&\([2,20,22,22]\)   & -         & -       &\([2,20,22,22]\)\\
\midrule 
\bf\color{Red} DIST &\([2,20,22,22]\) & - & -&\([20,22,22]\)\\
\midrule 
\bf\color{Cyan}  CONV &\([20,22,22]\)     &$5\times 5$     &1   &\([30,18,18]\)\\
\midrule 
\bf\color{gray} BN$+$ReLU&\([30,18,18]\)     & -         & -       &\([30,18,18]\)\\
\midrule 
\bf\color{Orange} MaxPool          &\([30,18,18]\)     &$2\times 2$     &2   &\([30,9,9]\)  \\
\midrule 
\bf\color{Cyan} CONV            &\([30,18,18]\)     &$5\times 5$     &3   &\([30,2,2]\)  \\
\midrule 
\bf\color{gray} BN$+$ReLU&\([30,2,2]\)       & -         & -       &\([30,2,2]\)  \\
\midrule 
\bf\color{Cyan} CONV             &\([30,2,2]\)       &$2\times 2$     &1   &\([30,1,1]\)  \\
\midrule 
\bf\color{gray} BN$+$ReLU&\([30,1,1]\)       & -         & -       &\([30,1,1]\)  \\
\midrule 
\bf\color{Cyan} FC           &\([30]\)           & -         & -       &\([50]\)      \\
\midrule 
\bf\color{Cyan} FC            &\([50]\)           & -         & -       &\([11]\)      \\
  \bottomrule
 \end{tabular}\\[5pt]
\begin{tabular}{@{}p{0.48\textwidth}@{}}
The network has two layers of complex-valued convolution (wFM)
and nonlinear activation ($G$-transport), distance
transform (DIST), and then a real-valued CNN classifier composed with standard convolution (CONV),
batch normalization (BN) and ReLU, max pooling (MaxPool), and fully
connected (FC) layers.  An input or output shape of 4 dimensions indicates 
complex-valued data,  with each complex number represented by two values: magnitude and phase.
For example, $[2,1,100,100]$ means a complex-valued 1-channel
$100\times 100$ image.   A shape of 3 dimensions indicates 
real-valued data.  In our network, DIST is the depth layer that
turns a complex-valued representation into a real-valued one; it separates the
complex-valued and real-valued representations in the SurReal CNN.
\end{tabular}
\label{tab:SurRealNet} 
\end{table}
}

\def\tabSurRealResNet#1{
\begin{table}[#1]\centering
\caption{SurReal Residual CNN in Detailed Layer Specification}
\setlength{\tabcolsep}{1pt}
\begin{tabular}{@{}lrcccr@{}}
 \toprule
 {\bf Layer Type} & {\bf Input Shape 1} &  {\bf Input Shape 2} & {\bf Kernel} & {\bf Stride} & {\bf Output Shape}\\
 \midrule
\bf\color{Red} wFM CONV&\([2,1,100,100]\) & -& $5 \times 5$	   &2   &\([2,20,48,48]\)\\
\midrule 
\bf\color{RoyalBlue}  $G$-transport&\([2,20,48,48]\) &  -           & -     &-   &\([2,20,48,48]\)\\
\midrule 
\bf\color{Red}  wFM CONV&\([2,20,48,48]\)   &-  & $5 \times 5$    &2   &\([2,20,22,22]\)\\
\midrule
\bf\color{Green} wFM Res   &\([2,20,48,48]\)   & \([2,20,22,22]\)& $5 \times 5$    &2   &\([2,20,22,22]\)\\
\midrule 
\bf\color{RoyalBlue} $G$-transport&\([2,20,22,22]\)          & -            & -         & -       &\([2,20,22,22]\)\\
\midrule 
\bf\color{Red} DIST &\([2,20,22,22]\)       & -            & -         & -       &\([20,22,22]\)\\
\midrule 
\bf\color{Cyan}  CONV             &\([20,22,22]\)     & -            &$5 \times 5$     &1   &\([30,18,18]\)\\
\midrule 
\bf\color{gray}  BN$+$ReLU&\([30,18,18]\)     & -            & -         & -       &\([30,18,18]\)\\
\midrule 
\bf\color{Cyan}  CONV Stack             &\([30,18,18]\)     & -            &$1,3,1$     &1   &\([30,18,18]\)\\
\midrule 
\bf\color{Green}  CONV Res&\([30,18,18]\)     &\([30,18,18]\)& -         & -       &\([40,18,18]\) \\
\midrule 
\bf\color{Orange} Maxpool          &\([40,18,18]\)     & -            &$2 \times 2$     &2   &\([40,9,9]\)  \\
\midrule 
\bf\color{Cyan}  CONV   &\([40,18,18]\)     & -            &$5 \times 5$     &3   &\([50,2,2]\)  \\
\midrule 
\bf\color{gray}   BN$+$ReLU&\([50,2,2]\)       & -            & -         & -       &\([50,2,2]\)  \\
\midrule 
\bf\color{Cyan}  CONV Stack &\([50,2,2]\)     & -            &$1,3,1$     &1   &\([50,2,2]\)\\
\midrule 
\bf\color{Green}  CONV Res&\([50,2,2]\)     &\([50,2,2]\)& -         & -       &\([60,2,2]\) \\
\midrule 
\bf\color{Cyan}  CONV             &\([60,2,2]\)       & -            &$2 \times 2$     &1   &\([70,1,1]\)  \\
\midrule 
\bf\color{gray}  BN$+$ReLU&\([70,1,1]\)       & -            & -         & -       &\([70,1,1]\)  \\
\midrule 
\bf\color{Cyan}   FC          &\([70]\)           & -            & -         & -       &\([30]\)      \\
\midrule 
\bf\color{Cyan}   FC           &\([30]\)           & -            & -         & -       &\([11]\)      \\
  \bottomrule
\end{tabular}
\\[5pt]
\begin{tabular}{@{}p{0.48\textwidth}@{}}
Our SurReal residual CNN utilizes both complex-valued residual blocks (Fig. \ref{fig:ResCNN}) as well as real-valued residual blocks.   
In CONV Stack, we stack three convolutions with $1\times 1$, $3\times 3$, $1\times 1$ kernels respectively. 
We zero pad the inputs on the $3\times 3$ convolution to preserve the spatial dimensions. 
\end{tabular}
 \label{tab:SurRealResNet} 
\end{table}
}

\def\figMstarSets#1{
\begin{figure}[#1]
    \begin{center}
    \setlength{\tabcolsep}{1pt}
    \begin{tabular}{cc}
    \includegraphics[trim=10 50 40 50, width=0.24\textwidth]{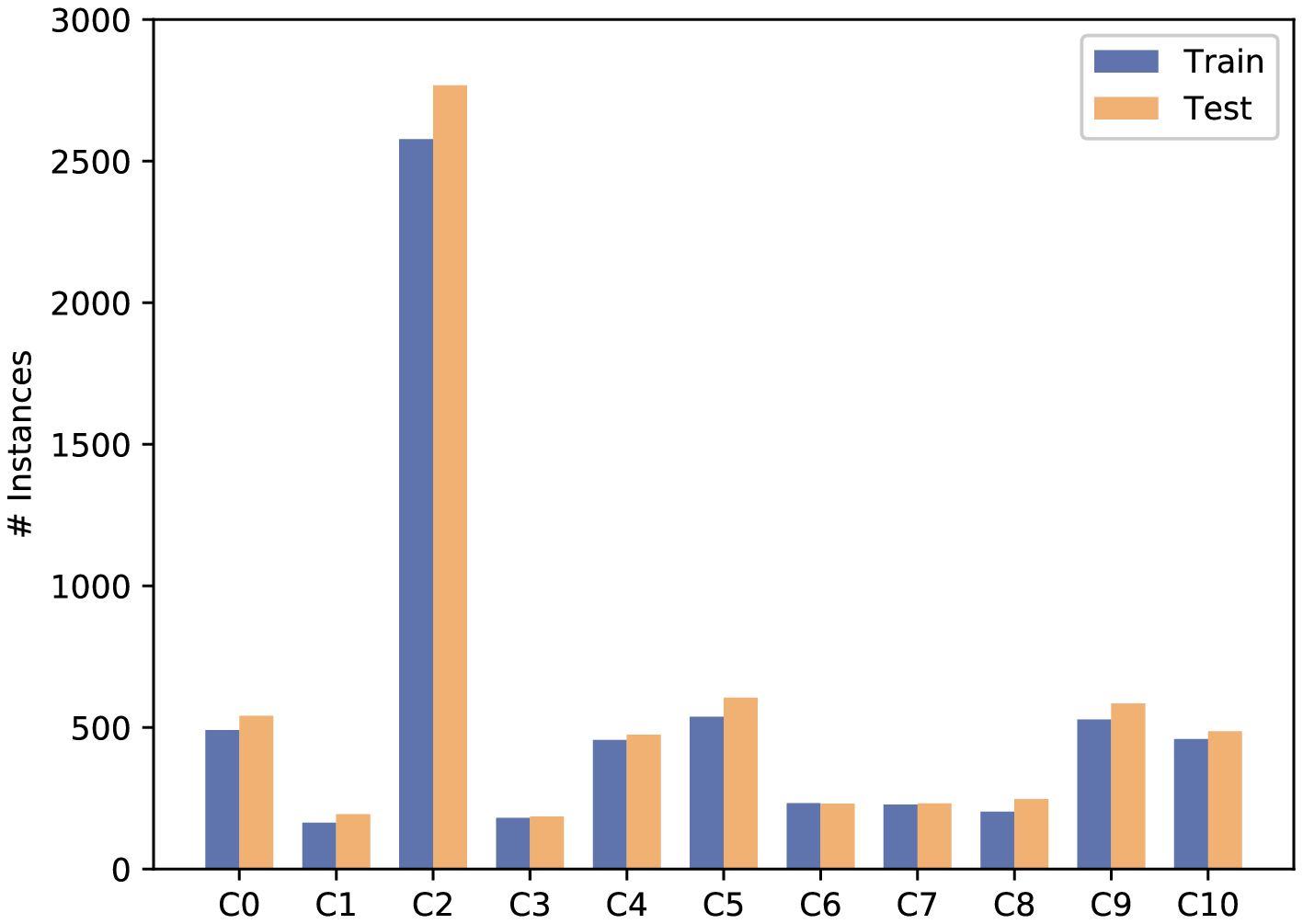} &
    \includegraphics[trim=10 50 40 50, width=0.24\textwidth]{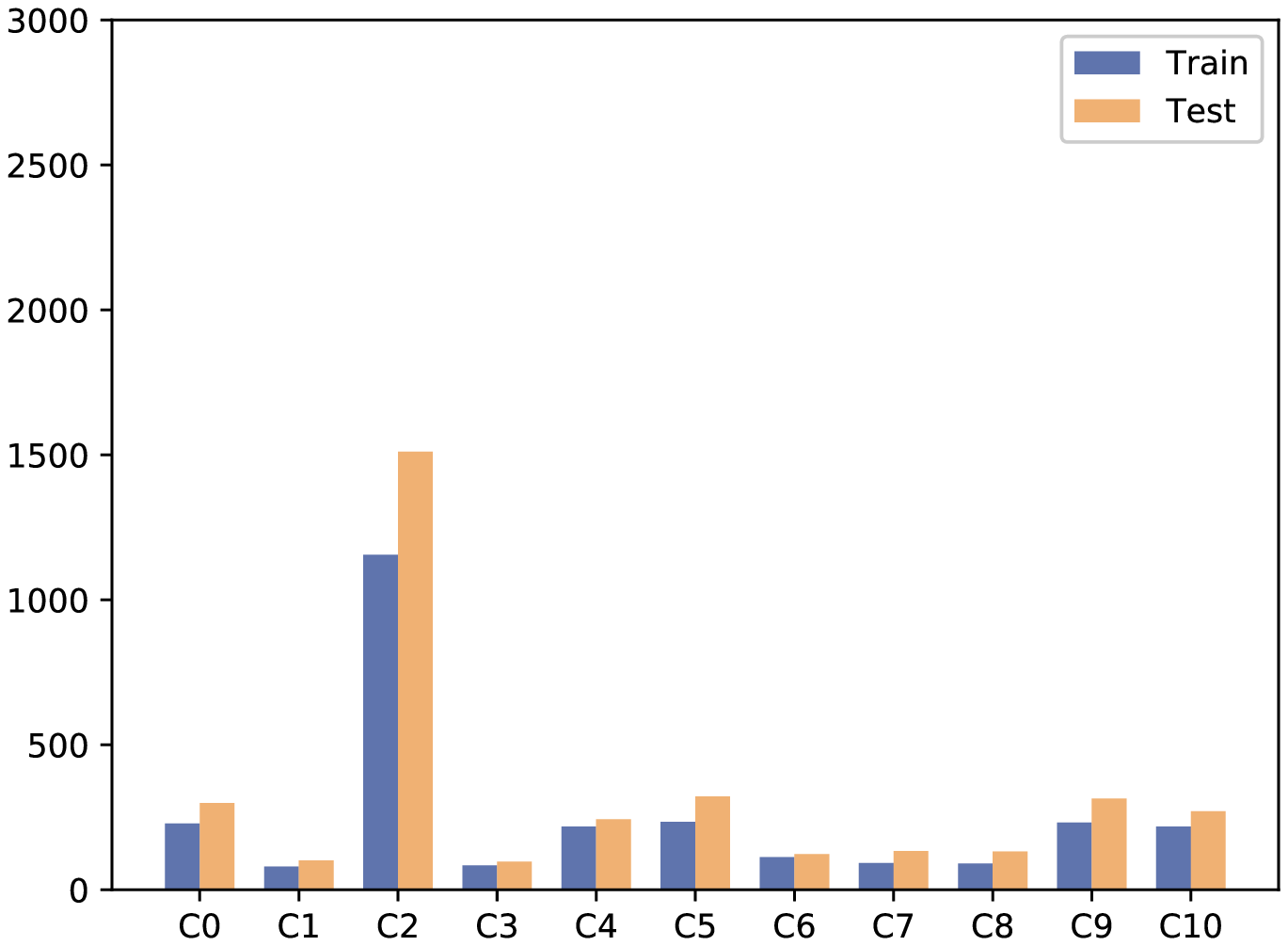} \\
    \end{tabular}
    \end{center}
    \caption{MSTAR large (L) and small (S) subsets have highly imbalanced classes.  C2 is the largest class,  C0, C4, C5, C9, C10 come next at about 20\% of the size of C2, and C1, C3, C6, C7, C8 are at about 8\% of the size of C2.}
    \label{fig:mstarSets}
\end{figure}
}

\def\figRadioML#1{
\begin{figure}[!ht]
    \centering
    \includegraphics[width=0.48\textwidth]{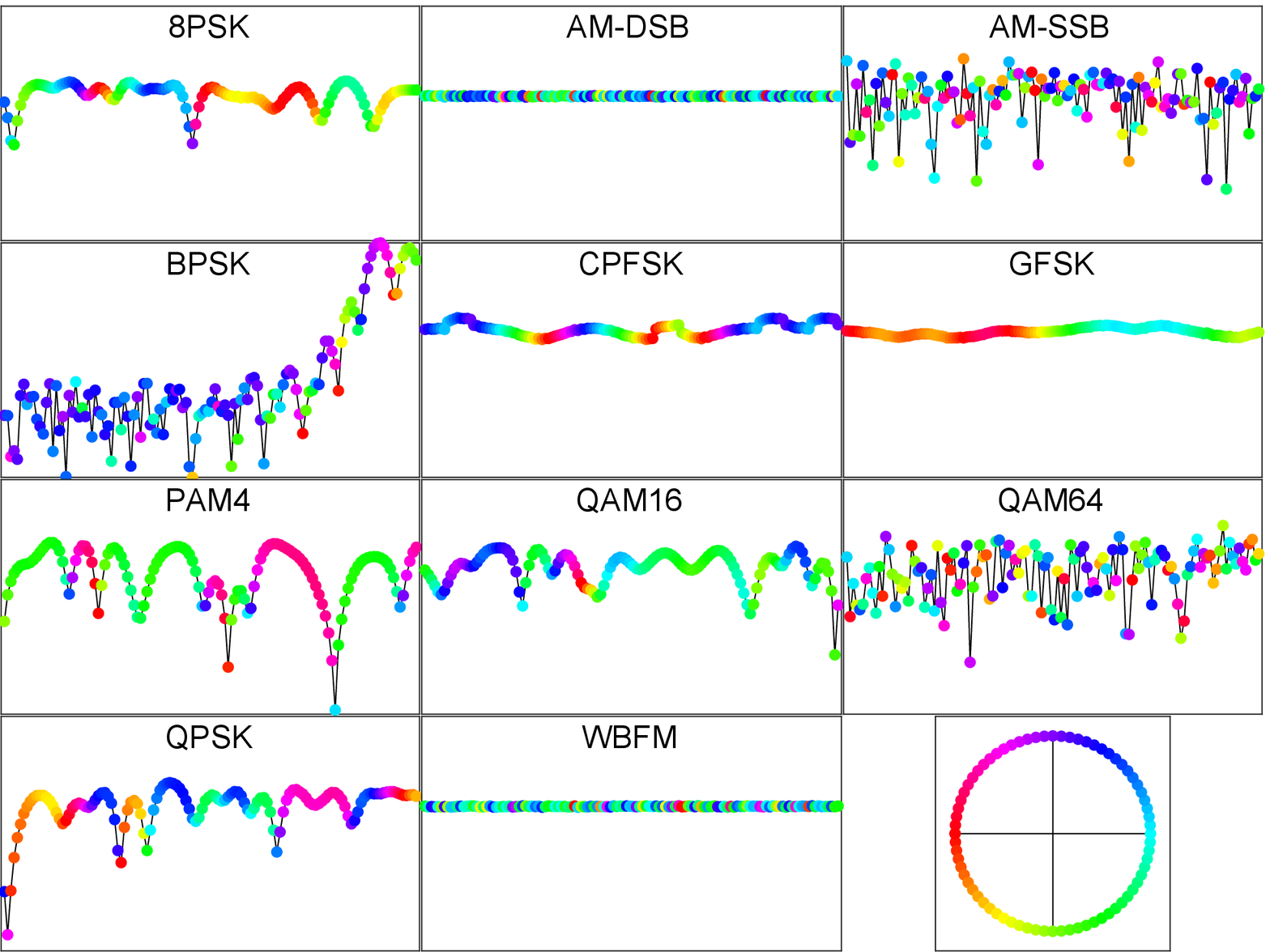}
    \caption{RadioML sample instances.  Each instance is a 128 time-step complex-valued signal at SNR 10.  We plot each signal as a line of 128 connected colored dots over time ($x$-axis): the height of the dot indicates the magnitude  on a log scale ($y$-axis), and the color of the dot indicates the phase with the HSV colormap.  The first 11 plots show one instance per class with the class name labeled on the top.  The last  plot visualizes the complex number $\{e^{i\,\theta},\theta\in[-\pi,\pi)\}$ as our colored dots, illustrating how the color varies with the phase $\theta$.    
    The shape of RF signal reflects both the message it is carrying and the modulation mode it is subject to.  The classifier must  ignore the distinction in the message but focus on the distinction in the modulation.
    }
\label{fig:RadioML}
\end{figure}
}

\def\figRFSNR#1{
\begin{figure}[#1]
    \includegraphics[width=0.48\textwidth]{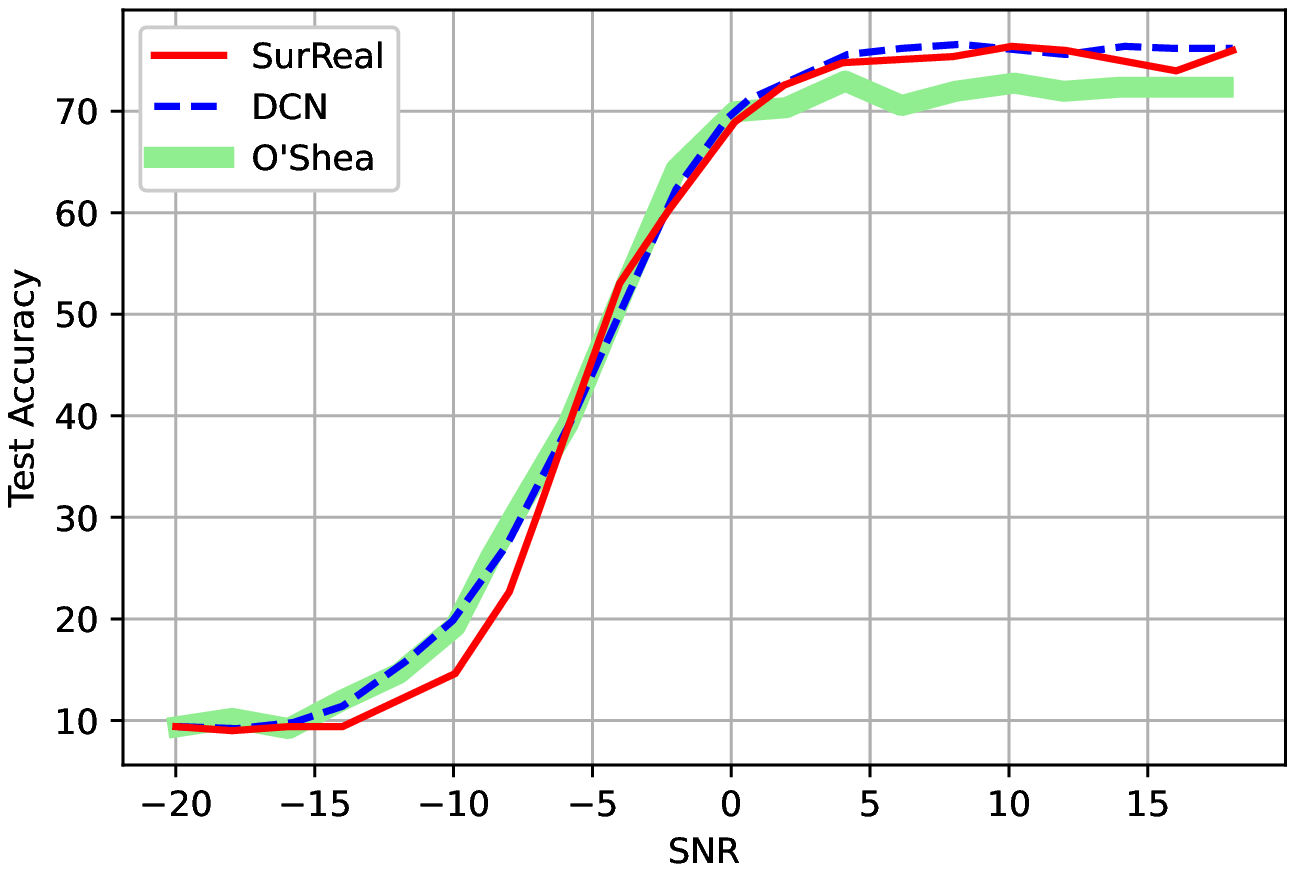}
    \caption{Test accuracy comparison on RadioML with a varying SNR.  All the models perform poorly at a very low SNR.  Our SurReal underperforms the two baselines for SNR in $[-15,-5]$.  The complex-valued baseline DCN is on par with the real-valued baseline O'Shea at a negative SNR, but better at a positive SNR.  Likewise, our SurReal outperforms the real-valued baseline at a positive SNR, at only 0.7\% (58\%) size of O'Shea (DCN).  }
    \label{fig:RFSNR}
\end{figure}
}

\def\figRFModelSize#1{
\begin{figure}[#1]\centering
\includegraphics[width=0.48\textwidth]{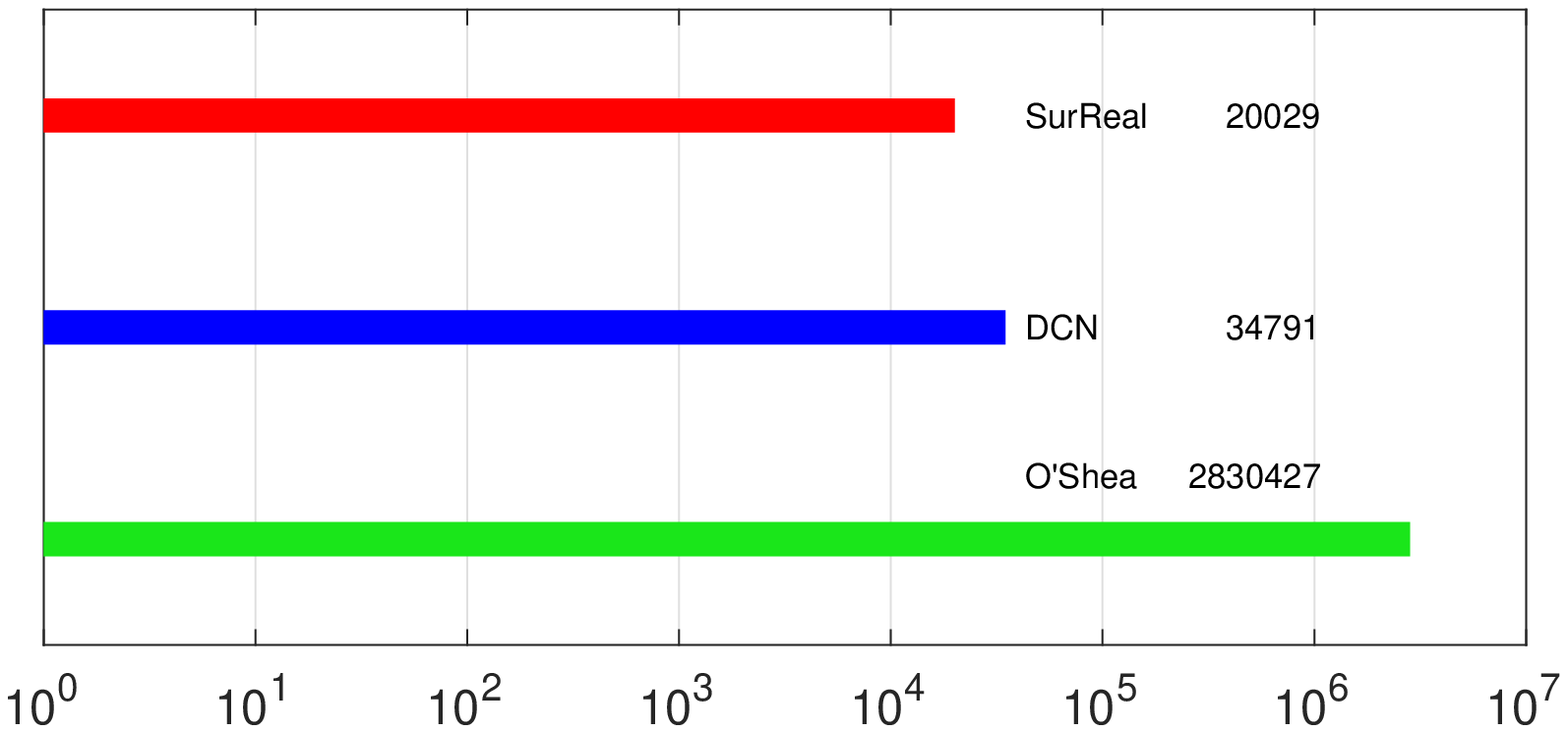}
\caption{
Our SurReal model for RadioML is much leaner than baselines.
Each model size is plotted as a horizontal bar on a log scale,  labeled with the model name and the number of parameters on the right.
O'Shea's model is the real-valued baseline and the largest  with 2.8M parameters.  DCN is the complex-valued baseline,  with 35K parameters at 1\% of O'Shea.  Our SurReal CNN has 20K parameters at 0.7\% (58\%) of O'Shea (DCN).   
}
    \label{fig:RFModelSize}
\end{figure}
}

\tabSurRealNet{t}
\tabSurRealResNet{t}

\section{Experiments}

We compare our SurReal complex-valued classifier
against two baselines.  {\bf 1)} The first baseline is a real-valued CNN classifier such as ResNet50 which ignores the geometry of complex numbers and treats each complex value as two independent real numbers. 
{\bf 2)} The second baseline is the deep complex networks (DCN) which extends real-valued CNN layer functions to the complex domain by the form of the functions such as complex-valued convolution, batch-normalization, nonlinear activation, and weight initialization \citep{trabelsi2017deep}.
The specific DCN models used in \citep{trabelsi2017deep} are very large, on the order of one million parameters.  To help directly compare complex-valued layer functions, we adopt the same SurReal Residual architecture for the DCN baseline, but replacing all the convolution layers (complex-valued wFM and real-valued convolution) and nonlinear activation functions with DCN's proposed counterparts. 

We experiment on two complex-valued datasets: SAR image dataset MSTAR \citep{keydel1996mstar} and 
synthetic RF signal dataset RadioML  \citep{o2016convolutional}.  
All the models are trained on a GeForce RTX $2080$ GPU for $120$ epochs, using Adam optimizer and cross-entropy loss. 

%\figMstarDist{bp}

\tabMstarSize{tp}
\figMstarModelSizes{tp}

\subsection{MSTAR Classification}

\noindent
{\bf MSTAR dataset.}  
There are a total of $15,716$ complex-valued X-band SAR images, distributed unevenly over 11 classes:  The first 10 classes contain different target vehicles and the last 1 class contains background clutter.  See Table \ref{tab:MstarSize} for the total number of images per class.
We take the $100\times 100$ center crop of each image and convert the complex value of each pixel into the polar form.

\noindent
{\bf Real-valued CNN baseline.}  
ResNet50 \citep{he2016deep} is widely successful on real-valued image classification and it is also used as a baseline in \citep{shao2018lightweight}.

\noindent
{\bf Two SurReal CNN architectures.}  
Table \ref{tab:SurRealNet} lists detailed layer specification of our basic model.  Table \ref{tab:SurRealResNet} adds residual connections.  Both models have two complex-valued convolutions (with nonlinear activation), one distance transform layer, and two real-valued convolutions (with batch normalization and ReLU), max pooling, and two fully connected layers.
While we have listed $G$-transport in the two tables, we have also tried tReLU as the complex-valued nonlinear activation function; their difference is insignificant in initial experiments on the MSTAR dataset, and we focus on G-transport for its simplicity.

\noindent
{\bf Model size comparison.}  
While ResNet50 and DCN have $23$ million  and 155K parameters respectively, our SurReal CNN has 67K parameters and SurReal residual CNN has 109K parameters.  We can further reduce the parameter count by implementing convolutions with tensor ring decomposition \citep{oseledets2011tensor,zhao2016tensor}.
Fig. \ref{fig:MstarModelSizes} plots these model sizes on a log scale.  The saving is substantial: our SurReal CNN  is less than $0.1\%$ of the real-valued baseline and $44\%$ of the complex-valued baseline.

\figMstarAccuA{tp}

\noindent
{\bf Task 1: 10-class target recognition.}  
For the 10 target classes, we split all the data in 5 varying proportions of 1\%, 5\%, 10\%, 20\%, 30\% for training and the rest for testing.   Fig. \ref{fig:MstarAccuA} shows that our SurReal significantly outperforms DCN and ResNet50, especially when a small percentage of training data is used.  At 5\% training and 95\% testing, the accuracy is 90\% for SurReal, 60\% for DCN, and 45\% for ResNet50.

\figMstarSets{tp}

\noindent
{\bf Task 2: 11-class classification.}
We also include the remaining clutter class which contains miscellaneous background images.  We create two random subsets, large (L) and small (S), and the small set of $6,295$ images are contained  entirely in the large set of $12,610$ images.
Fig. \ref{fig:mstarSets} shows the number of instances across 11 classes and in training/testing splits.

\tabMstarAccuB{tp}
\figMstarConf{tp}
\figMstarConvResponse{tp}

Table \ref{tab:MstarAccuB} shows that all the models perform at a high accuracy of 99\% for the large dataset.  The performance drops as the overall data size is reduced by half in the small dataset, but the drop is the least at $-0.8\%$ for SurReal-Res, followed by $-1.4\%$ for SurReal, $-1.7\%$ for ResNet50, and the most at $-6.6\%$ for DCN.  Our results against the two baselines suggest that it is both the residual connections and more importantly how we handle complex-valued data that delivers more generalizing performance from smaller training data.

Fig. \ref{fig:MstarConvResponse} shows sample channel responses from 
our SurReal CNN on MSTAR-S images.  With two complex-valued wFM convolutions, followed by distance transform and real-valued convolution, the representation for each input SAR image quickly becomes more distinctive across classes, facilitating accurate discrimination.

Fig. \ref{fig:MstarConf} shows the confusion matrix between classes on MSTAR-S.  In general, the more training instances in the class, the least confusion with other classes at the test time.  However, despite the significant class imbalance, the performance gap is small between the minority and majority classes.  Residual connections help clear up more confusion.

\subsection{RadioML Classification}

\figRadioML{tp}
\figRFModelSize{tp}
\figRFSNR{tp}

\noindent
{\bf RadioML Dataset.}
They are synthetically generated radio signals with modulation operating over both voice and text data.  Noise is added further for channel effects. 
Each signal has 128 time samples and is tagged with a signal-to-noise ratio (SNR), in the range of $[-20,18]$ with an increment step of $2$. 
There are $11$ modulation modes, of which BPSK, QPSK, 8PSK, 16QAM, 64QAM, BFSK, CPFSK, PAM4 are digital modulations, and WB-FM, AM-SSB, and AM-DSB are analog modulations.  
There are $20,000$ instances per modulation.   
See sample instances in Fig. \ref{fig:RadioML}.
The data is split $50/50$ between training and testing.

\noindent
{\bf  Real-valued CNN Baseline.}
We use O'Shea's model \citep{o2016convolutional} and feed the 1-channel complex-valued RF signal as a (real,imaginary) two-channel signal.

\noindent
{\bf Model size comparison.}  
We follow the architecture of DCN and SurReal for images and adapt the spatial dimensions to fit the $1\times 128$ RF signals.  Fig. \ref{fig:RFModelSize}
plots these model sizes on a log scale. 
Our SurReal CNN is $0.7\%$ of the real-valued baseline and $58\%$ of the complex-valued baseline.

\noindent
{\bf Accuracy over SNR.}
Fig. \ref{fig:RFSNR} compares the test accuracy at various SNR levels.  Our SurReal underperforms the baselines at lower SNRs and outperforms the real-valued O'Shea baseline at higher SNRs.  For example at SNR 10, it achieves $76.1\%$, surpassing O'Shea's $72.7\%$ with $0.7\%$ of its model size and on par with DCN's $76.3\%$ at $58\%$ of its size.

\section{Summary and Conclusions}
Deep learning is widely adopted in machine learning and computer vision.  Most existing deep learning techniques are developed for data in a vector space.  However, practical data often have correlations between channels and are better modeled as points a manifold.

While the Nash embedding theorem \citep{boothby1986introduction} assures us that it is always feasible to embed the data on a manifold into a higher dimensional vector space,
it would also result in an increase in the model complexity and training time.  Recent geometric deep learning approaches develop tools for spaces with certain geometry such as graphs and surfaces.  

We deal with deep learning on complex-valued data, and we approach it from a geometric perspective.  The common approach is to represent complex-valued data as two-channel real-valued data and then all the real-valued deep learning tools can be used.  However, this representation ignores the underlying geometry that defines the complex-valued data:  Complex-valued data containing the same information could be subject to arbitrary complex-valued scaling.

We propose to model the space of complex numbers as a product manifold of non-zero scaling and planar rotations.  Arbitrary complex-valued scaling naturally becomes a group of transitive actions on this manifold.  We can subsequently define convolution on the manifold that is {\it equivariant} to this action group, and define distance transform that is {\it invariant} to this action group.  The manifold perspective also allows us to define new nonlinear activation functions such as tangent ReLU and $G$-transport, as well as residual connections on the manifold-valued data.  

A complex-valued CNN classifier composed of such layer functions has built-in invariance to complex-valued scaling, so that the model only needs to focus on the discrimination between classes.  Our experimental results validate our principled approach, and we dub our model {\it SurReal} based on its high performance achieved at a super-lean model size compared with real-valued or complex-valued baselines.

\ifCLASSOPTIONcaptionsoff
  \newpage
\fi

\section*{Acknowledgements}
This research was supported, in part, by Berkeley Deep Drive and DARPA.
An earlier version of a few diagrams was constructed with help from Liu Yang.
We thank Utkarsh Singhal for thoughtful proofreading and anonymous reviewers for critical and constructive comments.
{
\small
\bibliographystyle{plainnat}
\bibliography{references.bib}
}

\end{document}